\documentclass{article}
\usepackage{enumitem}
\usepackage{microtype}
\usepackage{graphicx}
\usepackage{subfigure}
\usepackage{booktabs}
\usepackage{enumerate}
\usepackage{bbm}
\usepackage{comment}
\usepackage{hyperref}
\usepackage{amsmath}
\usepackage{amssymb}
\usepackage{amsthm}
\usepackage{authblk}
\usepackage{natbib}
\usepackage{algorithmic}
\usepackage{algorithm}
\usepackage[dvipsnames]{xcolor}
\usepackage{bbold}
\usepackage{xfrac}
\usepackage{mathtools}
\usepackage{physics}
\usepackage{wrapfig}
\usepackage{nicefrac}
\usepackage{tikz}
\usepackage[margin=1in]{geometry}
\usepackage[capitalize,noabbrev]{cleveref}
\setcitestyle{authoryear,round,citesep={;},aysep={,},yysep={;}}

\hypersetup{
    colorlinks = true,
    linkcolor = {blue},
    citecolor = {blue},
    urlcolor = {blue}
}

\tikzset{
  midarr/.style={decoration={markings,mark=at position #1 with {\arrow{stealth}}},postaction={decorate}},
  midarr/.default=0.5
}

\usetikzlibrary{calc,intersections,decorations.markings}

\theoremstyle{plain}
\newtheorem{theorem}{Theorem}[section]

\newtheorem{lemma}[theorem]{Lemma}
\newtheorem{corollary}[theorem]{Corollary}
\theoremstyle{definition}
\newtheorem{definition}[theorem]{Definition}
\newtheorem{assumption}[theorem]{Assumption}
\theoremstyle{remark}
\newtheorem{remark}[theorem]{Remark}

\newcommand{\twidth}{m_\star} 

\newcommand{\E}{\mathbb{E}}

\newcommand{\Cov}{\text{Cov}}
\newcommand{\cN}{\mathcal{N}}

\renewcommand{\d}{\textup{d}}

\def\sfQ{{\mathsf Q}}
\def\bzero{{\boldsymbol 0}}

\DeclareMathOperator*{\argmin}{arg\,min}

\DeclareSymbolFont{rsfs}{U}{rsfs}{m}{n}
\DeclareSymbolFontAlphabet{\mathscrsfs}{rsfs}

\def\bC{{\boldsymbol C}}

\def\bG{{\boldsymbol G}}
\def\bI{{\boldsymbol I}}

\def\bM{{\boldsymbol M}}

\def\bQ{{\boldsymbol Q}}

\def\bT{{\boldsymbol T}}

\def\bW{{\boldsymbol W}}
\def\bX{{\boldsymbol X}}

\def\bZ{{\boldsymbol Z}}

\def\ba{{\boldsymbol a}}
\def\bb{{\boldsymbol b}}

\def \boldm{{\boldsymbol m}}

\def\bv{{\boldsymbol v}}
\def\bw{{\boldsymbol w}}
\def\bx{{\boldsymbol x}}
\def\by{{\boldsymbol y}}
\def\bz{{\boldsymbol z}}

\def\bbeta{{\boldsymbol \beta}}

\def\bxi{{\boldsymbol \xi}}

\def\bTheta{{\boldsymbol \Theta}}

\def\spn{{\rm span}}

\def\de{{\rm d}}
\def\Tr{{\rm Tr}}

\def\de{{\rm d}}

\def\spn{{\rm span}}

\def\cT{{\mathcal T}}

\def\cT{{\mathcal T}}
\def\cH{{\mathcal H}}

\def\sP{{\mathsf P}}

\def\cD{{\mathcal D}}

\def\He{{\rm He}}

\def\de{{\rm d}}

\def\bTheta{{\boldsymbol \Theta}}

\def\cT{{\mathcal T}}

\def\bb{{\boldsymbol b}}

\def\bC{{\boldsymbol C}}

\def\sZ{\mathsf Z}

\def\bs{{\boldsymbol s}}

\newcommand{\R}{\mathbb{R}}

\author[1]{Leonardo Defilippis}
\author[2]{Florent Krzakala}
\author[1]{Bruno Loureiro}
\author[1,3]{Antoine Maillard}

\affil[1]{\small Departement d'Informatique, \'Ecole Normale Sup\'erieure, PSL \& CNRS}
\affil[2]{\small Information, Learning and Physics Laboratory, \'Ecole Polytechnique F\'ed\'erale de Lausanne (EPFL)}
\affil[3]{\small INRIA Paris}

\title{Optimal scaling laws in  learning hierarchical multi-index models}

\date{}

\begin{document}

\maketitle

\begin{abstract}
In this work, we provide a sharp theory of scaling laws for two-layer neural networks trained on a class of \emph{hierarchical multi-index} targets, in a genuinely representation-limited regime. We derive exact information-theoretic scaling laws for subspace recovery and prediction error, revealing how the hierarchical features of the target are sequentially learned through a cascade of phase transitions. We further show that these optimal rates are achieved by a simple, target-agnostic spectral estimator, which can be interpreted as the small learning-rate limit of gradient descent on the first-layer weights.  Once an adapted representation is identified, the readout can be learned statistically optimally, using an efficient procedure.
As a consequence, we provide a unified and rigorous explanation of scaling laws, plateau phenomena, and spectral structure in shallow neural networks trained on such hierarchical targets.
\end{abstract}

\section{Introduction}
Despite the staggering practical success of neural networks, we still lack a predictive theory answering a deceptively simple question:
given a structured learning problem, \emph{how does the network adapts to the task, in what order are the relevant features in the data learned, and how these translate in statistical efficiency?} This question sits at the intersection of three active lines of research.
First, the empirical observation of \emph{neural scaling laws} \citep{kaplan2020scaling,brown2020language,hoffmann2022empirical} suggest that the performance of large models scale as a power-law in the training resources, yet---with few exceptions---our mathematical understanding of this relationship remains largely confined to linear models or networks in the lazy regime.
Second, recent literature of the training dynamics of neural networks increasingly suggest that feature learning is not a smooth process, but are  associated to long \emph{plateaus and abrupt transitions} in the risk, with features  (or  {\it concepts} in this context) appearing sequentially rather than all at once \cite{saxe2014exact,wei2022emergent,schaeffer2023emergent}. 

Third, empirical analyses of trained networks have uncovered robust regularities in how the learned representations manifest in the trained network weights, such as in their spectral structure, but without a first-principles explanation of \emph{why} particular features are preferred or \emph{when} they should emerge \citep{martin2021implicit,wang2023spectral,thamm2024random}.

This paper provides an end-to-end answer to these questions in a mathematically tractable---yet genuinely feature learning---setting of a two-layer neural network trained on \emph{hierarchical multi-index} data, a class of structured supervised learning tasks where the target function depends on a hierarchical combination of  functions of the covariates \cite{ren2025emergence,arous2025learning, defilippis2025scaling}.  Here, hierarchical denote the fact that the direction are  {\it ordered}, in the sense that their relative importance decay as a power-law with their index number. This leads to a
quasi-sparsity of the target representation, with ordered hierarchy of feature strengths as is classical in signal processing \cite{mallat1999wavelet,donoho2006compressed}.

In this task, learning is fundamentally \emph{representation-limited}: the labels $y = g\left(\bW_{\star} \bx\right)$ depend on a low-dimensional but unknown subspace ${\rm span}(\bW_{\star})\subset \mathbb{R}^{d}$ of the input space $\bx\in\mathbb{R}^{d}$, and generalization hinges on discovering this subspace as well as possible from the data.  Our goal is to turn this qualitative picture into sharp, quantitative predictions. More precisely, our \textbf{main contributions} are: 
\begin{itemize}[leftmargin=*, itemsep=2pt, topsep=2pt]

\item[(i)] \textbf{Optimal Bayes rates for feature recovery.}  
We derive the exact information-theoretic limits for recovering the features subspace $\spn(\bW_\star)$ from $n=\Theta(d)$ samples. Further, we precisely characterize the associated optimal mean-squared error rates (a.k.a. \emph{scaling laws}), unveiling the presence of cross-overs and plateaus regions that translate a fundamental trade-off between data-scarce and model-limited regions, depending on the hardness of the underlying hierarchical structure. Interestingly, these rates coincide with the minimax bounds for quasi-sparse recovery achieved by the LASSO \cite{raskutti2011minimax}, and with previously conjectured results for shallow networks with quadratic activation~\cite{defilippis2024dimension}. Our analysis shows that these scalings are in fact universal for a broad class of hierarchical multi-index targets, well beyond the  quadratic setting.

\item[(ii)] \textbf{A matching spectral algorithm and sequential feature emergence.}  
We introduce a simple, \emph{target-agnostic} spectral estimator that provably achieves the Bayes-optimal rates derived above. This estimator can be either interpreted as a spectral method applied to the network, or as the small--learning-rate limit of gradient descent on a suitably regularized loss, via the Hessian of a modified objective. As a consequence, gradient-based training is shown to recover the signal subspace optimally, provided it is properly tuned. Moreover, the recovery proceeds sequentially: the $i$-th direction in the hierarchy becomes detectable at a sample complexity $n_i = \Theta(i^{{2\gamma}}d)$ ---where $\gamma$ is the exponent controlling the hierarchy--- leading to a cascade of sharp phase transitions.  This produces plateaus and abrupt drops in the prediction error, providing a theoretical account of the progressive emergence of features observed empirically in neural networks.

\item[(iii)] \textbf{Learnability of the target function by neural networks.}  
Once the features/concepts subspace is identified, we show that learning the second-layer (readout) weights incurs no additional statistical bottleneck: the resulting prediction error matches the Bayes-optimal rate for subspace recovery. Crucially, this two-stage procedure succeeds because the spectral subspace estimator is fully agnostic to the target function and does not require any prior knowledge of its structure or tuning to its decay profile. We prove that this two-stage mechanism—optimal representation discovery followed by efficient readout estimation—applies in the noisy and regularized setting, and to a wide class of  multi-index targets. 
\end{itemize}
Together, these results show that representation learning proceeds through a sequence of sharp phase transitions as the number of samples increases. New directions in the signal subspace emerge one after another, leading to plateaus and abrupt drops in the prediction error. This sequential emergence (to borrow the terminology of \cite{wei2022emergent,schaeffer2023emergent}) of features provides a precise theoretical underpinning for the empirically observed phenomenon of progressive concept learning in neural networks. Strikingly, the resulting phenomenology closely mirrors that recently uncovered for diagonal and quadratic neural networks using heuristic tools from statistical physics, where progressive feature emergence and distinct scaling regimes were observed \citep{defilippis2025scaling}. Our contribution is to place this picture on firm theoretical ground and to substantially generalize it to a broader and more realistic class of models and targets. 

\paragraph{Further related works ---}
Most theoretical work on neural scaling laws focus on effectively linear models, such as kernel methods and random features, where the generalization behavior can be characterized through spectral properties of fixed representations
\citep{caponnetto2007optimal,spigler2020asymptotic,cui2021generalization,cui2023error,defilippis2024dimension,atanasov2024scaling,bahri2024explaining,maloney2022solvable,paquette2024four,kunstner2025scaling,bordelon2020spectrum}. 
Exceptions have focused either in the joint-training of both layers for linear networks \cite{bordelon2024dynamical,bordelon2025feature,worschech2024analyzing} or non-linear settings with fixed-features \cite{wortsman2025kernel, worschech2024analyzing, braun2025fast}. A central difference is that these works introduce scaling through the covariate distribution, while in our work it is induced by the hierarchical nature of the task. Moreover, our work departs from this literature by addressing scaling laws in a genuinely non-linear, feature-learning regime.  

Closer to us are a recent line of work analyzing two-layer networks with structured first-layer weights and hierarchical multi-index target \citep{ren2025emergence,arous2025learning,defilippis2025scaling}. In particular, \cite{ren2025emergence,arous2025learning} studied one-pass SGD in this setting, with \cite{arous2025learning} focusing on the case of a quadratic neural network architecture. The main difference with this work is that we analyze full-batch ERM, characterizing the optimal scaling laws and showing that they can be achieved computationally by a gradient-descent like algorithm, thus providing a fundamental benchmark for the SGD rates in these works.

Complementary, \cite{defilippis2025scaling} derived rates for ERM in quadratic neural networks trained on quadratic targets, building on earlier work by \cite{erba2025nuclear} (see also \cite{maillard2024bayes,xu2025fundamental,martin2026high}). Our results show that the rates from \cite{defilippis2025scaling} are universal a large class of target functions (any generative exponent two functions in the language of \cite{damian2024computationalstatistical}), closing a gap between the purely quadratic setting and general hierarchical multi-index targets. The key  underlying these universal scaling laws is the combination \emph{quasi-sparsity} of the target representation, encoded by a heavy-tailed spectrum that induces an ordered hierarchy of feature strengths~\citep{mallat1999wavelet,donoho2006compressed}, and an implicit \emph{rank-sparsity} bias, leading to LASSO-like behavior \citep{raskutti2011minimax}.

On a technical level, our work builds on the toolbox of approximate message passing (AMP) and its associated state evolution
\citep{bayati2011dynamics,donoho2013phase,javanmard2013state,berthier2020state,zou2022concise,feng2022,gerbelot2023graph,dudeja2023universality,erba2025nuclear}. In particular, our results leverage the connection between AMP and Bayes-optimal estimation for single- \cite{barbieroptimal2019} and multi-index \cite{aubin2018committee,troiani2024fundamental} models. Similarly, we build up on the literature on optimal spectral methods derived from AMP for single- \citep{mondelli2018fundamental,lu2020phase,maillard2022construction,damian2024computationalstatistical} and multi-index \cite{kovavcevic2025spectral,defilippis2025optimalspectral} functions.

\section{Setting}\label{sec:setting}
Consider a supervised regression problem with training data $\mathcal{D}=\{(\bx_{i},y_{i})\in\mathbb{R}^{d+1}:i\in[n]\}$, which we assumed were drawn i.i.d. from a joint distribution over $\mathbb{R}^{d+1}$. Recall that the goal in regression is to learn a target function $f_{\star}(\bx)=\mathbb{E}[y|\bx]$ from the training data $\mathcal{D}$. In the following, we will be interested in particular structured tasks where the dependence of the target $f_{\star}$ on the covariates $\bx\in\mathbb{R}^{d}$ is given by a hierarchical combination of low-dimensional subtasks. Mathematically, this is formalized by \emph{hierarchical multi-index models}.  
\begin{definition}[Hierarchical multi-index model]
\label{assumption:general_setting}
Let $\bW_\star = (\bw^\star_{k}\in\R^d)_{k\in[m_\star]}$ denote a family of $m_{\star}$ orthogonal vectors of norm $\|\bw^\star_k\|^2 = d$. A \emph{hierarchical multi-index} target is defined as
    \begin{equation}
        f_{\star}(\bx) =  \sum_{k=1}^{\twidth} a_k^\star g_k(\langle \bw_{k}^{\star},\bx\rangle),        
    \end{equation}
    where $a^\star_1>a_2^\star>\ldots>a_{\twidth}^\star$ satisfy $\sum_{k=1}^{\twidth}(a_k^\star)^2 = 1$ and $g_{k}:\mathbb{R}\to\mathbb{R}$. Moreover, we say $f_{\star}$ is a \emph{scale-free} hierarchical multi-index model if $a^{\star}_{k}=\Theta(k^{-\gamma})$ for some $\gamma > 0$. 
\end{definition}
A few remarks are in order.
\begin{itemize}
    \item As the name suggests, hierarchical multi-index models can be seen as a sum of $m_{\star}$  effectively one-dimensional tasks $g_{k}(z_{k})$, with decreasing weight $a^{\star}_{k}$. 
    \item More generally, multi-index models are functions of the type $f_{\star}(\bx)=g(\bW_{\star}\bx)$, where $g:\mathbb{R}^{m_{\star}}\to\mathbb{R}$ is known as the \emph{link function} and 
    \begin{equation}\label{eq:def:indices_z}
    \bz=\bW_{\star}\bx\in\mathbb{R}^{m_{\star}}
    \end{equation}
    are known as the \emph{indices}. They have been widely studied as statistical models for regression in the statistics literature \cite{li1991sliced, yuan2011identifiability, babichev2018slice}. 
    \item More recently, multi-index models have gained in popularity in the machine learning theory literature as generative models for data, where they have been used to prove feature learning separation results for neural networks \cite{damian2022neural,abbe23a,dandi2024two}. 
    \item Scale-free hierarchical multi-index models were considered recently in \cite{ren2025emergence,arous2025learning,defilippis2025scaling}.  Note that when $\gamma > 1/2$, the target is {\it quasi-sparse} in the index basis \citep{mallat1999wavelet,donoho2006compressed}. This makes the model a suitable framework for investigating feature learning in neural networks, which are known to exhibit an implicit bias towards sparse estimators \cite{gunasekar2017implicit,soudry2018implicit,andriushchenko2023sgd}.
    \item In scale-free hierarchical multi-index models with $\gamma < 1/2$, the coefficients must scale as $a_k^\star = \Theta(k^{-\gamma} \twidth^{\gamma - 1/2})$ to guarantee the boundedness of the labels.
\end{itemize}
\begin{definition}
\label{def:data_setting}
The training data $\mathcal{D}=\{(\bx_{i},y_{i})\in\mathbb{R}^{d+1}:i\in[n]\}$ is drawn i.i.d. from a scale-free hierarchical multi-index model $f_\star(\bx)$ as in Def.~\ref{assumption:general_setting}, in particular
\begin{equation}\label{eq:def:teacher_model}
    y_i = f_\star(\bx_i) + \sqrt{\Delta}\xi_i
\end{equation}
where $\bx_{i}\sim\mathcal{N}(0,\sfrac{1}{d}\bI_{d})$, $\Delta > 0$ is the noise variance and $\xi_i\sim\cN(0,1)$. 
\end{definition}
Moreover, our results depend on the following additional assumption on the link function.
\begin{assumption}
\label{assumption:gen_exp}
For each index $k\in[m_{\star}]$, $g_k$ is an even function. Moreover, for $z\sim\cN(0,1)$, for some constants $b,C,D>0$, independent of $k, n, d, \alpha, m_\star$,
    \begin{align}
        &|g_k(z)| \leq C(1+|z|^b),\\ 
        &D<\E_z[g_k^2(z)] < C,\quad D<|\E_z[g_k''(z)]| < C.
    \end{align}
\end{assumption}
\begin{remark}
The parity assumption on $g_k$ ensures that learning $\spn(\bW_\star)$ is \emph{non-trivial}, by ruling out linear correlations that would allow recovery at arbitrarily small sample complexity. Indeed, given $\bz$ as in eq. \eqref{eq:def:indices_z} and $y$ as in eq. \ref{eq:def:teacher_model}, when $\E[\bz\mid y]=0$ a.s.—which holds here by parity—no efficient algorithm can even weakly recover $\spn(\bW_\star)$ below the critical threshold
\begin{equation}
\alpha_c := \left(\sup_{\bM\in\mathbb S^+_{\twidth},\,\|\bM\|_{F}=1}
\|\E_{y}\bG(y)\bM\bG(y)\|_F\right)^{-1},
\end{equation}
with $\bG(y):=\E[\bz\bz^\top-\bI_{\twidth}\mid Y=y]$
\cite{barbieroptimal2019,mondelli2018fundamental,lu2020phase,damian2024computationalstatistical,troiani2024fundamental}.

The condition on $\E_z[g_k^2(z)]$ controls the label variance, while the lower bound on
$\E_z[g''_k(z)]$ ensures detectability in the proportional regime $n=\Theta(d)$
(equivalently, a generative exponent equal to $2$), so that the relative difficulty of each index is governed solely by the decay of $a_k^\star$. \footnote{Our analysis can be easily adapted to the less restrictive assumption that there exists an integer $\beta \geq 1$ such that $D < |\E_Z[g_k^\beta(Z)(Z^2-1)]| < C$. This would result in a simple modification of the scaling laws and not affect the overall message of the present manuscript.}

Hierarchical multi-index models with generative exponent equal to 2 includes the vast majority of cases of interest, while the class of models with exponent larger than $2$ mostly includes fine-tuned examples \cite{glm_barbier,damian2024computationalstatistical}.
\end{remark}

Our key goals in the following are twofold: (i) to characterize what are the best achievable rates (a.k.a. scaling laws) in recovering concepts and features for  this class of hierarchical tasks; 
(ii) to show that out-of-the box models such as neural networks  can learn efficiently the features, and the entire functions as well, with those optimal scaling rates.

In order to quantify the recovery of each individual index, or concept, we define following metric.
\begin{definition}[Matrix-MSE] Let $\bw^{\star}_{k}\in\mathbb{S}^{d-1}(\sqrt{d})$ denote one of the target indices $k\in[d]$. We define the matrix-MSE associated to a predictor $\bw\in\mathbb{R}^{d}$ as: 
   \begin{equation}
   {\rm mse}_k(\bw) := \frac{1}{d^2}\E\left[\|\bw\bw^{\top} - \bw^\star_k\bw_k^{\star\top}\|_F^2\right].
   \end{equation}
\end{definition}
Additionally, we characterize the {\it weak recovery} transition for the index $k$, {\it i.e.} the minimum number of samples required by an estimator to correlate non-trivially with a specific concept.
\begin{definition}[$k-$critical threshold]\label{def:k_critical} Given an estimator $\hat{\bw}_k \in\R^{d}$ of the signal direction $\bw_k^\star$ (i.e. a measurable function of the training data $\bX,\by$), the $k-$critical threshold is defined as the minimum sample complexity such that the estimator exhibits a finite overlap with $\bw_{\star,k}$, namely  
    \begin{equation}
        \inf\left\{\alpha>0:  d^{-1}|\langle\hat{\bw}_k,\bw_{\star,k}\rangle| = \Theta_d(1)\right\}. 
    \end{equation}
\end{definition}
The recovery of $\spn(\bW^\star)$ is assessed by the sum of the squared errors for each individual direction, weighted by the contribution of each concept to the target variance.
\begin{definition}[Weighted MSE] \label{def:weighted_MSE} Let $\bW \in \R^{\twidth\times d}$. We define the weighted mean-squared error as
    \begin{equation}
    \label{eq:weightedMSE}
        {\rm MSE}_\gamma(\bW) := \sum_{k\in K} (a_k^\star)^2 {\rm mse}_k(\bw_k).
    \end{equation}
\end{definition}

For the second goal --- showing that neural networks can efficiently learn $f_{\star}$ --- we will focus on two-layer neural networks:
\begin{equation}\label{eq:def:model_NN}
    f(\bx;\bTheta) := \ba^\top\sigma(\bW\bx + \bb),
\end{equation}
with weights $\bTheta:=\left(\ba\in\R^p,\,\bW\in\R^{p\times d}\right)$ and a generic activation function $\sigma:\R\to\R$. As we shall see, two-layer neural networks can agnostically learn with respect to the model, i.e. without knowledge of the individual tasks $g_k$ nor the details in Def.~\ref{assumption:general_setting}. We quantify the generalization capacity of this model through its excess risk
\begin{equation}\label{eq:def:risk}
        R(\bTheta) = \E\left[\left(f_\star(\bx) - f(\bx;\bTheta)\right)^2\right].
\end{equation}

\section{Main results}
\label{sec:mainres}
We now discuss our main contributions, which are threefold. First, we will derive the information-theoretical optimal reconstruction error for the class of scale-free hierarchical multi-index models introduced in \cref{def:data_setting}, as quantified by the weighted reconstruction error in \cref{eq:weightedMSE}. This result will serve as a fundamental benchmark on the difficulty of subspace recovery for a given $\gamma>0$. Second, we will construct a spectral estimator for subspace recovery, and will prove that this algorithm achieves the aforementioned optimal reconstruction error. Finally, we turn our attention to two-layer neural networks, proving that a two-step training procedure also achieves the optimal scaling laws for the excess risk. 

\subsection{Bayes-optimal rates for feature recovery}
As a first result, we characterize the information-theoretic limits for the recovery of the subspace $\spn(\bW_\star)$ in terms of the decay of the target coefficients $\ba^\star$, the sample complexity $\alpha$ and the subspace dimension $m_\star$.
\begin{definition}[Optimal MSE]
\label{def:MMSE}
Let $\mathcal{D}=(\bX,\by)$ be drawn as in def. \ref{def:data_setting}. Then, the optimal (weighted) mean-square error is achieved by the posterior average:
    \begin{align*}
        {\rm MMSE}_\gamma := \sum_{k=1}^{\twidth} \frac{(a_k^\star)^2}{d^2}\E\left[\|\E[\bw_k\bw_k^\top\,|\,\cD] - \bw_k^\star\bw_k^{\star\top}\|^2_F\right]
    \end{align*}
\end{definition}
By definition, ${\rm MMSE}_\gamma$ is a lower-bound for the smallest achievable weighted mean-squared error ${\rm MSE}_\gamma(\hat{\bW})$ by any estimator $\hat{\bW}$ that is a function of the dataset $\cD$. Our first main result quantifies precisely the rates of ${\rm MMSE}_\gamma$ as $\alpha\gg 1$, which by construction define the optimal scaling laws for subspace reconstruction in the class of scale-free hierarchical multi-index models. 
\begin{theorem}[Optimal scaling-laws]\label{theorem:bayes_rates}
In the setting of Definitions \ref{assumption:general_setting}, \ref{def:data_setting}, under Assumption \ref{assumption:gen_exp}, for $\alpha,\twidth\gg 1$, the Bayes-optimal mean-squared error satisfies
  \begin{equation}
        {\rm MMSE}_\gamma = \Theta_{\alpha,m_{\star}} \begin{cases}
            \alpha^{-1+\sfrac{1}{2\gamma}},&\gamma > 1/2,\;\alpha\ll   \twidth^{2\gamma},\\
            \twidth/\alpha,&\gamma >1/2,\;   \twidth^{2\gamma}\ll \alpha,\\
           1,&\gamma <1/2,\;\alpha\ll   \twidth,\\
              \twidth/\alpha,&\gamma < 1/2,\;   \twidth\ll \alpha.\\
        \end{cases}
    \end{equation}
    Moreover, the $k$-critical threshold of the Bayes estimator satisfies
    \begin{equation}
        \alpha^{\rm Bayes}_k = \Theta_k(k^{2\gamma}m_\star^{(1-2\gamma)_+}),
    \end{equation}
    where $x_+ \coloneqq \max(0, x)$.
\end{theorem}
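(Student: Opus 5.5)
Our plan is to reduce ${\rm MMSE}_\gamma$ to a family of decoupled scalar problems through the replica-symmetric (Bayes-optimal AMP) characterization of multi-index models \cite{aubin2018committee,troiani2024fundamental}, and then do an asymptotic analysis in $\alpha,\twidth$. We first invoke the known result that in the proportional regime $n=\alpha d$ with $\twidth$ fixed, the limiting MMSE is given by the state-evolution fixed point of Bayes-optimal AMP. The order parameter is the overlap matrix $\bQ\in\mathbb S^+_{\twidth}$, with $Q_{kl}=\lim d^{-1}\langle \hat\bw_k,\bw^\star_l\rangle$. Since the link is additive, $f_\star(\bx)=\sum_k a_k^\star g_k(z_k)$, and the prior is isotropic, we argue that the fixed point is diagonal, $\bQ={\rm diag}(q_1,\dots,q_{\twidth})$. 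Then ${\rm mse}_k=2(1-q_k^2)$ up to normalization, so ${\rm MMSE}_\gamma\asymp\sum_k (a_k^\star)^2(1-q_k)$. Taking $\twidth\to\infty$ after $d\to\infty$ will require the uniform-in-$\twidth$ bounds of Assumption~\ref{assumption:gen_exp}; we treat this as an interchange of limits to be justified.

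The core step is a linearization of the channel update. Write $z_k=\sqrt{q_k}\,\xi_k+\sqrt{1-q_k}\,u_k$. The conjugate parameter $\hat q_k=\alpha\,\E[(\partial_{\omega_k}\log Z_{\rm out})^2]$ is the Fisher information that $y$ carries about the $k$-th index. In this signal, index $k$ has amplitude $a_k^\star$ and is buried in effective noise of variance $\Delta+\sum_{l}(a_l^\star)^2(\text{residual})\asymp 1$. We plan a second-order (small-$a_k$) expansion using $\E[g_k'']\neq0$, i.e. generative exponent $2$. This gives two-sided bounds of the form
\begin{equation}
c\,\alpha (a_k^\star)^2\,\phi(q_k)\le \hat q_k\le C\,\alpha (a_k^\star)^2\,\phi(q_k),
\end{equation}
with $\phi(q)\asymp q$ near $0$ (a quadratic-channel BBP-type mechanism) and $\phi(q)\asymp1$ for $q$ near $1$. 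The prior update for a Gaussian or spherical direction is $q_k=\hat q_k/(1+\hat q_k)$. Consequently:
\begin{itemize}
\item $q_k>0$ exactly when $\alpha (a_k^\star)^2\gtrsim 1$, which gives $\alpha_k^{\rm Bayes}\asymp (a_k^\star)^{-2}$;
\item above this threshold, $1-q_k\asymp 1/(\alpha (a_k^\star)^2)$.
\end{itemize}
Inserting $a_k^\star\asymp k^{-\gamma}$ for $\gamma>1/2$, and $a_k^\star\asymp k^{-\gamma}\twidth^{\gamma-1/2}$ for $\gamma<1/2$, yields the threshold formula $k^{2\gamma}\twidth^{(1-2\gamma)_+}$.

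The rates then follow by summation, with $k_\star=\min(\twidth,\alpha^{1/(2\gamma)})$ the number of indices recovered. Indices $k\le k_\star$ each contribute $(a_k^\star)^2\cdot\frac{1}{\alpha (a_k^\star)^2}=1/\alpha$. Indices $k>k_\star$ each contribute $(a_k^\star)^2$. For $\gamma>1/2$ this gives:
\begin{itemize}
\item $k_\star/\alpha+\sum_{k>k_\star}k^{-2\gamma}\asymp \alpha^{-1+1/(2\gamma)}$ when $\alpha\ll\twidth^{2\gamma}$;
\item $\twidth/\alpha$ otherwise.
\end{itemize}
For $\gamma<1/2$, every threshold is at most $\twidth$ and the unrecovered tail carries mass $\Theta(1)$. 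This gives $\Theta(1)$ for $\alpha\ll\twidth$ and $\twidth/\alpha$ beyond. The intermediate range, where $\alpha$ is between $k^{2\gamma}\twidth^{1-2\gamma}$ and $\twidth$, also needs checking, because the tail sum $\sum_{k>k_\star}k^{-2\gamma}\twidth^{2\gamma-1}$ stays $\Theta(1)$ until $k_\star$ is comparable to $\twidth$.

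The main obstacle is making the two-sided Fisher-information bounds rigorous and uniform. We must show that cross-talk from the other $\twidth-1$ indices only acts as bounded effective noise, so that the constants $c,C$ do not depend on $k$ or $\twidth$. We must also exclude non-diagonal or spurious fixed points when $\twidth$ is large. Our first attempt would use the I-MMSE relation, which gives monotonicity of $\hat q\mapsto q$. We would sandwich the true channel between two easier ones: a genie channel that reveals $\{z_l\}_{l\neq k}$ (this gives the upper bound on information), and a channel in which the other indices are replaced by independent noise of equal variance (this gives the lower bound). Each is a single-index problem with generative exponent $2$ and SNR $\asymp (a_k^\star)^2$.
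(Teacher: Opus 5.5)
There is a genuine gap in the achievability half of your argument, i.e. the upper bound on ${\rm MMSE}_\gamma$ and on $\alpha_k^{\rm Bayes}$.

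\textbf{The starting premise fails here.} You take the limiting MMSE to be ``the state-evolution fixed point of Bayes-optimal AMP''. Because every $g_k$ is even, $\E[\bz\mid y]=0$. So $\bQ=0$ is always a state-evolution fixed point, and AMP from an uninformative start never leaves it. The MMSE is instead given by the \emph{global maximizer} of the replica-symmetric potential.

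\textbf{Your linearized bounds do not select that maximizer.} They say only two things:
\begin{enumerate}
\item[(a)] no fixed point with $q_k>0$ exists when $C\alpha(a_k^\star)^2<1$;
\item[(b)] any fixed point with $q_k>0$ has $1-q_k\asymp 1/(\alpha(a_k^\star)^2)$.
\end{enumerate}
Neither shows that the maximizer actually has $q_k>0$ once $\alpha(a_k^\star)^2\gg1$. That needs a free-entropy comparison in the coupled $\twidth\times\twidth$ variational problem, which you do not supply.

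\textbf{The decoupling rests on an unproved diagonality claim.} You assert that the maximizer is diagonal. Parity only gives invariance of the potential under $\bQ\mapsto\boldsymbol{S}\bQ\boldsymbol{S}$ with $\boldsymbol{S}$ a diagonal sign matrix; it does not give diagonality.

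\textbf{The ``independent noise'' comparison is only a scalar-channel statement.} Replacing the other indices by noise is a legitimate garbling of the scalar RS channel given a diagonal $\bQ$: you drop the side information $\omega_{\setminus k}$, and $\E[\E[u_k\mid\mathcal F]^2]$ is monotone in $\mathcal F$. It therefore inherits both problems above. At the level of the actual dataset it is not evidently a degradation, since the nuisance $\sum_{l\neq k}a_l^\star g_l(\langle\bw_l^\star,\bx_i\rangle)$ is a shared function of $\bx_i$.

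\textbf{How the paper closes the sandwich without the multi-index RS formula.}
\begin{itemize}
\item \emph{Lower bound.} The paper uses your genie, but on the full problem. By the law of total variance, revealing $\{\bw_h^\star\}_{h\neq k}$ can only lower the error. The conditional problem is exactly the single-index model with labels $a_k^\star g_k(\langle\bw_k^\star,\bx\rangle)+\sqrt{\Delta}\xi$, where the formula of \cite{barbieroptimal2019} applies. A small-SNR expansion of $\Psi_{\rm out}$, plus a global (not merely local) bound on the single-index potential, gives threshold $\Theta((a_k^\star)^{-2})$ and error $\Theta(1/(\alpha(a_k^\star)^2))$ (Appendix~\ref{app:sec:lower-bound}, Corollary~\ref{app:corollary:sindex_thresholds}).
\item \emph{Upper bound.} The paper exhibits an explicit estimator, the spectral method of Definition~\ref{def:spectral_estimator}. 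There the needed diagonality is the one-line parity fact $\E[z_kz_h\mid y]=0$ for $k\neq h$. The results of \cite{kovacevic25aspectral} then give threshold $\Theta((a_k^\star)^{-2})$ and overlap $1-\Theta(1/(\alpha(a_k^\star)^2))$ (Appendix~\ref{app:sec:spectral}).
\item \emph{Closing.} Any estimator's error upper-bounds ${\rm MMSE}_\gamma$, and weak recovery by the spectral method implies Bayes weak recovery. Hence $\alpha_k^{\rm oracle}\le\alpha_k^{\rm Bayes}\le\alpha_k^{\rm sp}$, and both ends match.
\end{itemize}

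\textbf{Repair.} Keep the genie as a full-problem oracle for the lower bound. Replace the RS-based achievability with any explicit estimator that has the right per-index rate, such as the spectral method. This removes the need for diagonality and for fixed-point selection.
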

A proof of this result is discussed in Appendix \ref{app:sec:IT}.

\subsection{Optimal Agnostic subspace recovery}
\label{sec:results:part2}
While \cref{theorem:bayes_rates} provides a fundamental benchmark, an equally important question is whether the optimal reconstruction rates can be \emph{efficiently} achieved by an algorithm which is agnostic of the underlying data distribution. In this section, we give an affirmative answer to this question by constructing an explicit spectral method achieving these rates.
\begin{definition}[Spectral estimator]\label{def:spectral_estimator}
    Given a {\it pre-processing} function $\cT:\R\to\R$ on the labels, consider the symmetric random matrix
\begin{equation}\label{eq:def:spectral_method_matrix}
    \bT = \sum_{i=1}^n \cT(y_i)\bx_i\bx_i^\top,
\end{equation}
with spectrum $\lambda_1\geq\lambda_2\geq\ldots\geq\lambda_d$. Define $r:=\min\{i\in\mathbb N\,\vert\, \lambda_{i+1}-\lambda_{i+2} < C/\sqrt{d}\}$, for some arbitrary constant $C$. We define the spectral estimator
\begin{align}
    \hat\bW^{\rm sp} = (\hat\bw_1,...,\hat\bw_r)^\top\in\mathbb{R}^{r\times d}
\end{align}
where $\hat \bw_{k}$ is the $k\in[r]$ eigenvector of $\bT$ corresponding to $\lambda_k$, normalized such that $\lVert\hat{\bw}_i\rVert^2 = d$. 
\end{definition}

\begin{figure}
    \centering
    \includegraphics[width=.7\linewidth]{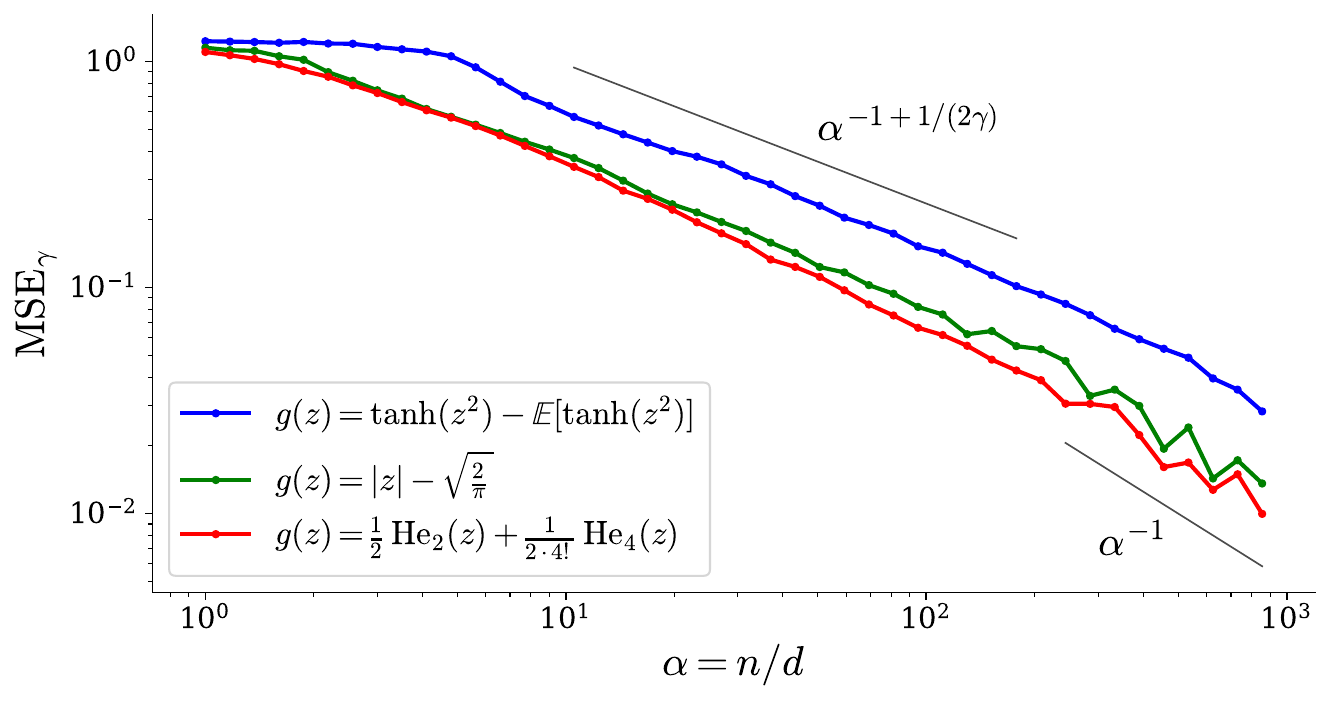}
    \caption{Weighted mean square error ${\rm MSE}_\gamma$ -- see Def.~\ref{def:weighted_MSE} -- of the spectral estimator  of Def.~\ref{def:spectral_estimator} with preprocessing function $\cT(y)=y/(1+|y|)$, averaged over $70$ instances. The target is given by the hierarchical multi-index model ~\ref{assumption:general_setting}, with $g_k(z)=g(z)\; \forall k$, stated in the legend, and $a_k^\star \propto k^{-\gamma}$, $\gamma = 1.3$. The covariates dimension is $d = 1000$, the feature space dimension is $m_\star = 10$.}
    \label{fig:rates}
\end{figure}
Such estimators have been studied in the context of multi-index models \citep{kovacevic25aspectral,defilippis2025optimalspectral} and proven to achieve the computationally optimal weak recovery threshold \cite{troiani2024fundamental}, for a tailored choice of the pre-processing function $\cT$. Instead, here we consider $\cT$ to be data-agnostic, satisfying only the following mild conditions.
\begin{assumption}\label{assumption:preprocessing}
    Given $Y \sim \cN\left(\sum_{k=1}^{\twidth}a_k^\star g_k(z_k),\Delta\right)$ with $\bz\sim\cN(\bzero,\bI_{m_{\star}})$, we assume that:
\begin{itemize}
    \item $\cT$ is bounded and there exists $\tau>0$ such that 
    \begin{equation}
        \tau = \inf \{c \,:\, \mathbb P(\cT(Y) < c) = 1\};
    \end{equation}    
    \item $\mathbb P(\cT(Y) = 0) < 1$.
\end{itemize}
\end{assumption} 
Under Assumption \ref{assumption:preprocessing}, \cite{kovacevic25aspectral} implies that, in the proportional high-dimensional limit $n,d\to \infty$ with $n=\Theta(d)$, the empirical spectrum of $\bT$ in eq.~\eqref{eq:def:spectral_method_matrix} converges to a bounded continuous density --- refereed as the {\it bulk} --- plus $r < m_\star$ isolated eigenvalues --- refereed as the {\it spikes} --- with $r$ depending on the sample complexity $\alpha=\sfrac{n}{d}$. 

The spikes are separated from the bulk by a finite gap and their corresponding eigenvectors exhibits a non-trivial correlation with $\spn(\bW_\star)$. Therefore, the criterion described in Definition \ref{def:spectral_estimator} (reminiscent of the heuristic ``elbow'' method in principal components analysis, see Fig. \ref{fig:elbow}) for selecting the number of eigenvalues, agnostically retains only the informative components, discarding the noise contributions associated with the bulk. Note also that the choice of cutoff $C/\sqrt{d}$ is arbitrary, as in general we want to separate the bulk with spacing $o_d(1)$ and the spikes with spacing $\Theta_d(1)$.

\begin{remark}[GD interpretation of spectral method]
    Note that this spectral method also admits an interpretation in terms of a gradient descent algorithm. Indeed, considering a slightly modified loss than the pure square one, this matrix is nothing but the Hessian of the loss \cite{bonnaire2025role}.
The early dynamics of GD on the first layer of (\ref{eq:def:model_NN}), for weights initialized on a sphere with small radius, will thus follow the proposed spectral method. This construction was recently used in \cite{zhang2025neuralnetworks} to show how two-layer neural networks identify features within the first few GD steps in the first layer weights. More precisely, they show that the early dynamics of a suitable gradient-based algorithm, reads as a power iteration
\begin{align}
   \bw_k^{t+1} &= \bw^t_k -\nabla_{\bw_k} \frac{1}{n}\sum_{i=1}^n \ell\left(y_i, \ba^\top \sigma(\bW^t\bx_i)\right) \\
   &\approx (\bI_d - a_k\sigma''(0) \bT) \bw_k^t,
\end{align}
where $\bT$ has the structure defined in eq. \eqref{eq:def:spectral_method_matrix} with pre-processing $\cT = \ell'(\,\cdot\,, 0)$, and $\ell'$ denoting the derivative of $\ell$ with respect to its second argument.
\end{remark}
Our second main theoretical result shows that the data-agnostic spectral method defined above indeed achieves the optimal reconstruction rates of \cref{theorem:bayes_rates}.
\begin{theorem}\label{thm:spectral_rates}
    In the setting of Definitions \ref{assumption:general_setting}, \ref{def:data_setting}, under Assumption \ref{assumption:gen_exp}, denote $\hat{\bW} = (\hat{\bW}^{\rm sp},\bzero_{(\twidth-r)\times d})\in\R^{\twidth\times d} $, where $\hat{\bW}^{\rm sp}\in\R^{r\times d}$, $r\leq \twidth$, is the spectral estimator  defined in \ref{def:spectral_estimator}, with $\cT$ satisfying Assumption \ref{assumption:preprocessing}.
    Then, for $\alpha,m_\star\gg 1$,
    \begin{equation}
       {\rm MSE}_\gamma(\hat\bW) = \Theta_{\alpha, m_{\star}}({\rm MMSE}_\gamma),
    \end{equation}
    and the spectral estimator's $k$-critical threshold $\alpha^{\rm sp}_k$ satisfies
    \begin{equation}
        \alpha^{\rm sp}_k= \Theta_k(\alpha^{\rm Bayes}_k).
    \end{equation}
\end{theorem}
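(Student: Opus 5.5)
The plan is to obtain, from the spectral characterization of \cite{kovacevic25aspectral,defilippis2025optimalspectral}, an explicit per-direction description of the spectral estimator. This reduces ${\rm MSE}_\gamma(\hat\bW)$ to a scalar sum over $k$, which we bound above and below and compare with \cref{theorem:bayes_rates}.

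\textbf{Step 1: reduction to a BBP-type problem.} Since the $g_k$ are even, conditional on $y$ the matrix $\bG(y)=\E[\bz\bz^\top-\bI_{\twidth}\mid y]$ is approximately diagonal. In the limit $n,d\to\infty$ with $\alpha=n/d$ fixed, we write $\bT$ as a generalized sample-covariance bulk plus a rank-$\twidth$ perturbation along the $\bw^\star_k$. The strength of the perturbation along $\bw^\star_k$ is governed by
\[
\theta_k:=\E\big[\cT(Y)(z_k^2-1)\big].
\]
Gaussian integration by parts applied to $z_k$, together with a Taylor expansion in $a_k^\star\ll 1$ (valid for $k\gg 1$, and for all $k$ when $\gamma<1/2$), should give $\theta_k=a_k^\star\,\E[g_k''(z)]\,\E[\cT'(\bar Y)]+O((a_k^\star)^2)$. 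Here $\bar Y$ denotes the label with the $k$-th term removed. Assumption \ref{assumption:gen_exp} then yields $|\theta_k|=\Theta(a_k^\star)$, after checking that $\E[\cT'(\bar Y)]$ is bounded away from zero using Assumption \ref{assumption:preprocessing}. For the few large-$a_k^\star$ directions, we only need that $\theta_k$ is nonzero and of order one, which holds by the same assumptions.

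\textbf{Step 2: overlaps from the spiked resolvent.} Because the perturbations along distinct $k$ decouple to leading order, the standard spiked-model formulas (as in \cite{kovacevic25aspectral}) give the following behaviour for a spike of strength $\theta$ in a bulk with aspect ratio $\alpha$. An outlier appears iff $\sqrt\alpha\,|\theta|\gtrsim 1$. When it does, the squared overlap satisfies
\[
1-\frac{\langle\hat\bw_k,\bw^\star_k\rangle^2}{d^2}=\Theta\big((\alpha\theta_k^2)^{-1}\big)\quad\text{for }\alpha\theta_k^2\gg1 .
\]
Hence ${\rm mse}_k\asymp\min(1,1/(\alpha (a_k^\star)^2))$, with an outlier present iff $\alpha (a_k^\star)^2\gtrsim 1$. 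For $\gamma>1/2$ this gives $\alpha^{\rm sp}_k=\Theta(k^{2\gamma})$. For $\gamma<1/2$, $(a_k^\star)^2\asymp k^{-2\gamma}\twidth^{2\gamma-1}$ gives $\alpha^{\rm sp}_k=\Theta(k^{2\gamma}\twidth^{1-2\gamma})$. The gap criterion in Def.~\ref{def:spectral_estimator} retains exactly the outliers, because their spacings are $\Theta(1)$ while the bulk spacings are $O(1/\sqrt d)$. The eigenvector matching $\hat\bw_k\leftrightarrow\bw^\star_k$ follows from the ordering of the $\theta_k$.

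\textbf{Step 3: summation.} We bound
\[
{\rm MSE}_\gamma\asymp\sum_{k\le\twidth}(a_k^\star)^2\min\big(1,(\alpha (a_k^\star)^2)^{-1}\big).
\]
For $\gamma>1/2$, split at $k_\alpha=\alpha^{1/(2\gamma)}$. The detected directions contribute $\sum_{k<k_\alpha}\alpha^{-1}=k_\alpha/\alpha$, and the undetected tail contributes $\sum_{k>k_\alpha}k^{-2\gamma}\asymp k_\alpha^{1-2\gamma}$. Both are $\alpha^{-1+1/(2\gamma)}$ when $k_\alpha<\twidth$, and the total is $\twidth/\alpha$ otherwise. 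For $\gamma<1/2$, the terms are $\twidth^{2\gamma-1}k^{-2\gamma}$, which sum to $\Theta(1)$ when $\alpha\ll\twidth$ and to $\twidth/\alpha$ when $\alpha\gg\twidth$. These match \cref{theorem:bayes_rates} up to constants.

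\textbf{Main obstacle.} The hard step is making Step 2 uniform when $\twidth\to\infty$ together with $\alpha$. There are many spikes of vanishing strength, they are nearly degenerate, and their mixing could spread eigenvector mass across neighbouring $\bw^\star_j$. The weighted sum would tolerate such mixing only if it is local in $k$. I would control this with an isotropic local law for the resolvent of the bulk, combined with a perturbative estimate of the off-diagonal entries $\E[\cT(Y)z_jz_k]=O(a_j^\star a_k^\star)$. The aim is to show that the misalignment cost is absorbed in the $\Theta(1/(\alpha (a_k^\star)^2))$ per-direction error.
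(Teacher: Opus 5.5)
Your Steps 1--3 follow the paper's route. You use the outlier/overlap characterization of \cite{kovacevic25aspectral}, decouple the directions by parity, and expand to first order in $a_k^\star$. Your $\theta_k$ is the large-$t$ leading term of the paper's scalar equation $\alpha^{-1}=\E[\cT(Y)G_k(Y)/(t-\cT(Y))]$, and your Stein computation is equivalent to its Fourier--Taylor bound on $\mathcal E_k$. This gives threshold $\Theta((a_k^\star)^{-2})$, overlap deficit $\Theta(1/(\alpha (a_k^\star)^2))$, and the same summation. The paper closes via ${\rm MMSE}^{\rm oracle}_\gamma\le{\rm MMSE}_\gamma\le{\rm MSE}_\gamma(\hat{\bW})$, since the upper half of Theorem~\ref{theorem:bayes_rates} is itself proved with this estimator.

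\textbf{Missing idea: the decoupling is exact.} In Step 2 you decouple only ``to leading order'', and that is what creates your main obstacle. For $j\neq k$, the substitution $z_j\mapsto -z_j$ leaves the law of $(\bz,Y)$ invariant, because $g_j$ is even and the $z_h$ are independent standard Gaussians, while it flips the sign of $z_jz_k$. Hence $\E[z_jz_k\mid Y]=0$. So $\bG(Y)$ is exactly diagonal, and every off-diagonal entry $\E[\cT(Y)z_jz_k/(t-\cT(Y))]$ vanishes identically, not just up to $O(a_j^\star a_k^\star)$. The determinant equation then factorizes into $\twidth$ scalar equations. The overlap theorem of \cite{kovacevic25aspectral} gives $\langle\bw_h^\star,\hat{\bw}_k\rangle/d\to0$ for $h\neq k$ whenever the roots $t_k$ are distinct, however close they are. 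The statement is in the sequential limit: $d\to\infty$ at fixed $\alpha,\twidth$, then $\alpha,\twidth\gg1$ on the limiting formulas. So you need neither an isotropic local law nor uniformity in $\twidth$.

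\textbf{A check that cannot be carried out.} Assumption~\ref{assumption:preprocessing} does not keep $\E[\cT'(\bar Y)]$ away from zero. A positive constant $\cT$ satisfies it, and then $\bT$ is a rescaled Wishart matrix with no outliers at all. There are two further problems of the same kind:
\begin{itemize}
\item The estimator keeps only the top eigenvalues, so you need $\theta_k>0$, not $|\theta_k|$ large. A direction with $\E[g_k'']\,\E[\cT'(\bar Y)]<0$ produces an outlier below the bulk, and Assumption~\ref{assumption:gen_exp} only controls $|\E[g_k'']|$.
\item Pairing the $k$-th eigenvector with $\bw_k^\star$ requires the $\theta_k$ to be ordered like the $a_k^\star$, for example when all $g_k$ are equal.
\end{itemize}
These nondegeneracy conditions must be added as hypotheses. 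The paper relies on them tacitly too, for instance when it solves its two-equation system for $t^{-1}=\Theta(a_k^\star)$. So this is a weakness of the statement, not a point where your argument departs from the paper's.
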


\begin{remark}
    Note that, as the spectral method \ref{def:spectral_estimator} retrieves $r\leq \twidth$ directions, we construct the full estimator by filling the remaining columns with zeros to evaluate ${\rm MSE}_\gamma$ in Def. \ref{def:weighted_MSE}. This zero-padding is harmless: the added zeros do not impact the derived scaling laws or the estimator's agnostic nature.
\end{remark}

In Appendix \ref{app:sec:spectral}, we show that the rates in Theorem \ref{thm:spectral_rates} allow for an interpretation reminiscent of the "universal" error decomposition in \cite{defilippis2025scaling}. Let $k_\alpha \asymp \min(\alpha^{1/(2\gamma)},\twidth)$ denote the number of weakly learnable features at sample complexity $\alpha$ -- corresponding to the number of spikes of $\bT$ eq. \eqref{eq:def:spectral_method_matrix}. Then, the spectral mean-squared error decomposes into two main contributions: (i) the approximation error of the learnable concepts, scaling as $k_\alpha \cdot \Theta(\alpha^{-1})$; (ii) the underfitting due to the unlearnable concepts, scaling as $\Theta\left(\sum_{k=k_\alpha+1}^{m_\star} (a_k^\star)^2 \right)$.

Figure \ref{fig:rates} shows the decay of the weighted MSE for three scale-free hierarchical model variants as a function of the sample complexity $\alpha$, based on finite-size experiments, using the spectral method introduced in Definition \ref{def:spectral_estimator}. Notably, around $\alpha \asymp m_\star^{2\gamma} \approx 400$, we observe a crossover between the rates $\alpha^{-1+1/(2\gamma)}$ and $m_\star/\alpha$, as predicted by Theorem \ref{thm:spectral_rates}. Figure \ref{fig:spectra} illustrates the sequential emergence of new concepts as sample complexity grows, visualized as spikes detaching from the eigenvalue bulk of the matrix $\bT$ in eq.~\eqref{eq:def:spectral_method_matrix}.

\begin{figure*}
    \centering
    \includegraphics[width=1\linewidth]{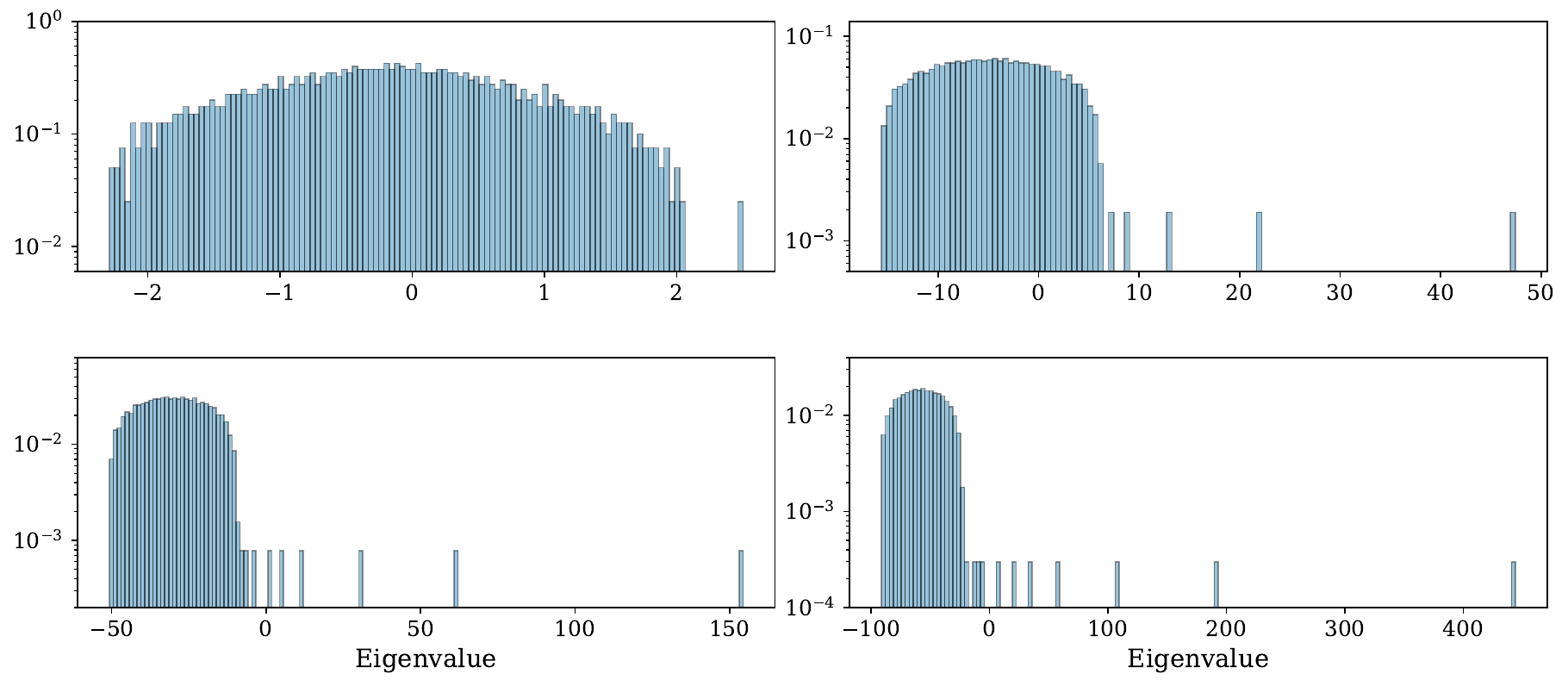}
    \caption{Empirical spectrum density of the matrix $\bT$ defined in eq. \eqref{eq:def:spectral_method_matrix}, with preprocessing function $\cT(y)=y/(1+|y|)$, at different sample complexities, highlighting the sequential emergence of concepts as the sample size increases. The target is given by a  hierarchical multi-index model \ref{assumption:general_setting}, with $g_k(z) = \frac{1}{2}\He_2(z)+\frac{1}{2\cdot4!}\He_4(z)\;\forall k$ and $a_k\propto k^{-\gamma}$, $\gamma = 1.3$. The covariates dimension is $d = 1000$, the feature space dimension is $\twidth =20$. ({\bf top left}) $\alpha = 5$, ({\bf top right}) $\alpha = 164$, ({\bf bottom left}) $\alpha = 611$, ({\bf bottom right}) $\alpha = 1638$.} 
    \label{fig:spectra}
\end{figure*}

\subsection{Learning multi-index with neural networks}
Finally, we turn our attention to the problem of learning hierarchical multi-index targets function with a two-layer neural network. In particular, we are interested in characterizing the scaling-laws associated with the excess risk in \cref{eq:def:risk} (rather than the reconstruction error in \cref{eq:weightedMSE}). Surprisingly, our main result in this section will show that the excess risk associated with a suitable training procedure described in Algorithm \ref{alg:network_training}, achieves the optimal rates in \cref{theorem:bayes_rates}.  
\begin{algorithm}[tbh]
   \caption{Spectral Initialization and Ridge Training}
   \label{alg:network_training}
\begin{algorithmic}
   \STATE {\bfseries Input:} Dataset $\cD = (\bX, \by)\in\R^{2n\times d}\times\R^{2n}$, hidden layer width $m$, regularization $\lambda$.
   \STATE {\bfseries Output:} Network parameters $\bW \in \R^{p\times d}$, $\ba \in \R^p$.
   
   \STATE
   \STATE \textit{1. Data Splitting}
   \STATE Partition the dataset $\cD$ into two disjoint sets $\cD_1$ and $\cD_2$ such that $|\cD_1| = |\cD_2| = n$.
   
   \STATE
   \STATE \textit{2. Feature Learning (Spectral Initialization on $\cD_1$)}
   \STATE Compute the spectral estimator $\hat{\bW}^{\rm sp}\in\R^{r\times d}$ Def. \ref{def:spectral_estimator} using $\cD_1$.
   \STATE Sample a random matrix $\bZ \in \R^{p \times r}$ with i.i.d. entries drawn from $\cN(0,r^{-1})$.
   \STATE Set first-layer weights: $\bW = \bZ\hat{\bW}$.
   
   \STATE
   \STATE \textit{3. Readout Training (Ridge Regression on $\cD_2$)}
   \STATE Compute the feature matrix $\Psi \in \R^{n \times p}$ on $\cD_2$, where $\Psi_{ij} = \sigma(\langle\bw_j,\bx_i^{(2)}\rangle+b_j)$,  $b_j\sim\cN(0,1)$.
   \STATE Solve for the second-layer weights:
   \STATE \quad $\ba = (\Psi^\top\Psi + n\lambda\bI_p)^{-1}\Psi^\top \by^{(2)}$
   
   \STATE
   \STATE \textbf{return} $\bW, \ba$
\end{algorithmic}
\end{algorithm}

\begin{theorem}\label{thm:readout}
   In the setting of Definitions \ref{assumption:general_setting}, \ref{def:data_setting}, under Assumption \ref{assumption:gen_exp}, with $g_k^\star$, $k\in[m_\star]$, polynomials of finite degree. There exist $n_0$ such that, for $n> n_0$, $p = \omega(n^{1/2})$, $\lambda = \Theta(n^{-1/2})$, the excess risk of eq.~\eqref{eq:def:risk} for a two-layer neural network eq. \eqref{eq:def:model_NN}, with $\sigma$ bounded and continuous, trained according to Algorithm \ref{alg:network_training} satisfies
    \begin{align}
        R_{\rm NN} = \Theta({\rm MMSE}_\gamma) + O(n^{-1/2}). 
    \end{align}
\end{theorem}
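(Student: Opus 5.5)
Conditionally on $\cD_1$, the first layer $\bW=\bZ\hat\bW$ is fixed and independent of $\cD_2$. Step 3 of Algorithm~\ref{alg:network_training} is then plain random-features ridge regression, where the features $\sigma(\langle \bz_j,\hat\bW\bx\rangle+b_j)$ depend on $\bx$ only through the $r$-dimensional projection $\hat\bu:=\hat\bW\bx/\sqrt{d}$. So I would split the excess risk into two pieces:
\begin{equation*}
R_{\rm NN}=\underbrace{\E\big[(f_\star(\bx)-\E[f_\star(\bx)\mid\hat\bu])^2\big]}_{\text{representation error}}+\underbrace{\E\big[(\E[f_\star(\bx)\mid\hat\bu]-f(\bx;\bTheta))^2\big]}_{\text{estimation/approximation error}},
\end{equation*}
which holds because $f(\cdot;\bTheta)$ is $\hat\bu$-measurable, so the cross term vanishes by orthogonality. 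The plan is to show the first term is $\Theta({\rm MMSE}_\gamma)$ and the second is $O(n^{-1/2})$.

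\textbf{Representation error.} Write $\bz=\bW_\star\bx$. Conditionally on $\hat\bW$, the Gaussian pair $(\bz,\hat\bu)$ is jointly Gaussian. Using Definition~\ref{def:spectral_estimator}, the eigenvectors of $\bT$ are approximately orthogonal to one another, and their overlap matrix $\bQ=\hat\bW\bW_\star^\top/d$ is asymptotically diagonal-dominant, with $q_k^2:=Q_{kk}^2=1-\Theta({\rm mse}_k(\hat\bw_k))$ for $k\le r$ and $q_k=0$ otherwise. Then $z_k\mid\hat\bu\sim\cN(q_k\hat u_k,1-q_k^2)$, up to off-diagonal corrections that I would control using the delocalization estimates behind Theorem~\ref{thm:spectral_rates}. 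Since the $g_k$ are even polynomials, the Hermite expansion gives
\begin{equation*}
\E\big[(g_k(z_k)-\E[g_k(z_k)\mid\hat u_k])^2\big]\asymp 1-q_k^2\asymp{\rm mse}_k .
\end{equation*}
The lower bound uses the degree-$2$ Hermite coefficient, which is nonzero because $|\E g_k''|>D$ by Assumption~\ref{assumption:gen_exp}. The upper bound uses the polynomial degree, and when $q_k=0$ the error equals ${\rm Var}(g_k)\asymp 1$. Summing with weights $(a_k^\star)^2$, and bounding the cross terms between different $k$ by the near-orthogonality of the indices, gives representation error $=\Theta({\rm MSE}_\gamma(\hat\bW))=\Theta({\rm MMSE}_\gamma)$ by Theorem~\ref{thm:spectral_rates}.

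\textbf{Estimation error.} The target $h(\hat\bu):=\E[f_\star\mid\hat\bu]$ is a polynomial of bounded degree in $r\le m_\star$ variables. Because $\cD_1$ and $\cD_2$ are independent, this is a well-specified-plus-noise regression, where the residual $f_\star-h$ together with the label noise acts as noise uncorrelated with the features. I would invoke a standard random-features ridge bound (Rudi--Rosasco type): with $p=\omega(\sqrt n)$ features and $\lambda\asymp n^{-1/2}$, the excess risk over the best RKHS predictor is $O(\|h\|_{\cH}^2\lambda+\text{noise}/(n\lambda))=O(n^{-1/2})$, where $\cH$ is the RKHS of the kernel $\E_{\bz,b}[\sigma(\langle\bz,\bu\rangle+b)\sigma(\langle\bz,\bu'\rangle+b)]$. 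For this I need $h\in\cH$ with norm bounded independently of $n$. I expect this to hold for bounded continuous non-polynomial $\sigma$ with Gaussian biases and polynomial $h$ in Gaussian input, possibly after truncating the input domain at $\sqrt{\log n}$ and controlling the tails. This also fixes $n_0$.

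\textbf{Main obstacle.} The hardest step is the representation error. Near the $k$-thresholds, and when $r$ spikes are close together, the eigenvectors can mix indices with comparable $a_k^\star$, so $\bQ$ need not be diagonal. The clean Hermite computation must then be replaced by one with $\hat\bu$ correlated to a rotation of $\bz$. I would handle this by noting that the conditional variance is bounded by the projection residual $\|\bz-\bQ^\top\hat\bu\|^2$ weighted by $(a_k^\star)^2$. That weighted residual is exactly what Theorem~\ref{thm:spectral_rates} controls; the matching lower bound comes from the fact that the conditional expectation is at least as informative as any estimator, so it cannot beat the MMSE.
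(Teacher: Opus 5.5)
Your plan has the same architecture as the paper's proof. You condition on $\cD_1$ and pass to the projected input $\bs=\hat\bW\bx$ (your $\hat{\boldsymbol u}$; with the paper's normalization there is no $1/\sqrt d$). You then control the information lost in $\E[f_\star\mid\bs]$ through the spectral overlaps, and treat the readout as random-features ridge regression with $p=\omega(\sqrt n)$ and $\lambda\asymp n^{-1/2}$. The representation part is sound and cleaner than the paper's first-order expansion $\bw_k^\star\approx\bv_k+\eta_k\bxi_k$. Your Pythagorean split is exact, and $\E[\He_j(z_k)\mid\bs]=q_k^j\He_j(s_k)$ gives matching bounds, hence $R_{\rm NN}\gtrsim{\rm MMSE}_\gamma$. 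The index mixing you worry about is excluded by the diagonality of $\bG(y)$. That is fortunate, because your fallback ``the conditional expectation cannot beat the MMSE'' conflates a prediction risk with the subspace error ${\rm MMSE}_\gamma$.

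The genuine gap is the estimation step, where you only expect that $h=\E[f_\star\mid\bs]\in\cH$ with an $n$-independent norm. This is false.
\begin{itemize}
\item Because $\sigma$ is bounded, $K(\bs,\bs)\le\|\sigma\|_\infty^2$. So every $f\in\cH$ satisfies $|f(\bs)|=|\langle f,K(\bs,\cdot)\rangle_{\cH}|\le\|\sigma\|_\infty\|f\|_{\cH}$.
\item But $h$ is a non-constant polynomial: its component $\tfrac12 a_k^\star q_k^2\,\E[g_k'']\,\He_2(s_k)$ is nonzero by Assumption~\ref{assumption:gen_exp}. So $h$ is not even essentially bounded under the Gaussian law of $\bs$, and $h\notin\cH$.
\item Since $\cH$ is dense in $L^2$ for bounded non-constant continuous $\sigma$, the risk then has no minimizer $f_{\cH}\in\cH$. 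That minimizer is exactly what Rudi--Rosasco's Theorem~1 assumes.
\item Truncating at $\|\bs\|\lesssim\sqrt{\log n}$ does not rescue an $n$-independent norm. Since $|f-h|\ge(|h|-\|\sigma\|_\infty\|f\|_{\cH})_+$, any $f\in\cH$ with $\|f-h\|_{L^2}^2\lesssim n^{-1/2}$ must have $\|f\|_{\cH}\to\infty$.
\end{itemize}
What is missing is a quantitative bound on $\inf_{f\in\cH}\{\|f-h\|_{L^2}^2+\lambda\|f\|_{\cH}^2\}$ together with a misspecified random-features theorem, or else a change of hypotheses on $\sigma$.

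You also cannot import the paper's resolution of this step. The paper claims $\cH$ contains all polynomials by writing $K(\bs,\bs')$ as a series in $\langle\bs,\bs'\rangle/r$ with positive coefficients. That Hermite correlation identity requires the pre-activations $\langle\bz,\bs\rangle$ and $\langle\bz,\bs'\rangle$ to be standard Gaussians, i.e.\ $\|\bs\|^2=\|\bs'\|^2=r$. Off this sphere the series would make $K(\bs,\bs)$ unbounded, contradicting $|K|\le\|\sigma\|_\infty^2$. So the $\Theta({\rm MMSE}_\gamma)$ lower bound stands, but the $O(n^{-1/2})$ upper-bound term needs a new argument.
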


\begin{remark}
    This final result demonstrate that neural networks can efficiently learn the target function, achieving the exact same rates as those of the optimal feature subspace recovery. 
    We note that our results do not imply that these rates correspond to the decay of the Bayes risk, defined as $\E\left[\left(f_\star(\bx)-\E[f_\star(\bx)|\bx,\cD]\right)^2\right]$, and which constitutes a lower bound for the risk $R(\mathbf{\Theta})$ in Eq. \eqref{eq:def:risk}. 
    Nevertheless, we notice that under the hypothesis of a {\it fully specialized} Bayes-optimal estimator—i.e., assuming $d^{-1}\mathbf{\hat{W}}^{\rm Bayes}(\mathbf{W}^{\star})^\top$ converges to a diagonal matrix in the high-dimensional limit—the risk for a network with matching architecture and Bayes weights, $f(\mathbf{x}) = \sum_{k=1}^{\twidth} a_k^\star g_k(\hat{\bw}^{\rm Bayes}_k)$, achieves the rates established in Theorem \ref{theorem:bayes_rates}. We leave a more precise analysis of the optimal risk as a future direction.
\end{remark}

\section{Sketch of the proofs}
We now discuss the key ideas involved in proving the results of \Cref{sec:mainres}.
\paragraph{Theorem \ref{theorem:bayes_rates} --} The proof relies on establishing matching rates for the upper and lower bounds of ${\rm MMSE}_\gamma$. In both case, we make sure of the decomposition of the target function into different directions, which we can recover sequently. The upper bound is provided by the spectral method \ref{def:spectral_estimator} in Theorem \ref{thm:spectral_rates}, which we discuss next.

The lower bound, derived in Appendix \ref{app:sec:lower-bound}, corresponds to the mean squared error of an oracle estimator granted access to the set of true weights $\{\bw_h^\star\}_{h\neq k}$ alongside the dataset. Conditioned on the remaining weights, this formulation reduces to the Bayes-optimal estimator for the effective dataset $\cD_k = \{(\bx_i, a_k g_k(\langle\bx_i,\bw^\star_k\rangle) + \sqrt{\Delta}\xi_i)\}$, where $\xi_i$ represents the same label noise as in the original problem and optimal MSE
\begin{align}
{\rm mmse}_k^{\rm oracle} &= \frac{1}{d^2}\E[\|\E[\bw\bw^\top | \cD_k] - \bw_k^\star\bw_k^{\star\top}\|^2_F]\\
\leq {\rm mmse}_k.
\end{align}
Consequently, the task can be analyzed as a single-index problem by leveraging the results in \cite{barbieroptimal2019}, which rigorously characterize the information-theoretic weak-recovery transition via the following free entropy functional:
\begin{align}\label{eq:sketch:free_energy}
&\sup_{m\in[0,1]}\left\{m +\log(1-m) + 2\alpha \Psi_{\rm out}(m)\right\},\\ 
&\Psi_{\rm out}(m) :=\nonumber \\
&\E_{W,V,Y}\log \E_{w\sim\cN(0,1)}\left[\mathsf P(Y|\sqrt{m}V+\sqrt{1-m} w)\right],\nonumber
\end{align}
with $V,W\sim\cN(0,1)$ and $Y\sim\mathsf P(\cdot|\sqrt{m}V+\sqrt{1-m}W)$. 

In particular, this work establishes that information-theoretic weak recovery is achieved when the maximizer in eq. \eqref{eq:sketch:free_energy} satisfies $m \neq 0$, implying then ${\rm mse}_k\left(\bw_k^{\rm oracle}\right) \to 1 - m^2$, in the high-dimensional limit. By expanding the potential $\Psi_{\rm out}(m)$ around $a_k^\star = 0$, we demonstrate that the oracle estimator weakly recovers $\bw_k^\star$ when $\alpha = \Omega((a_k^\star)^{-2})$, with error scaling as ${\rm mse}_k\left(\bw_k^{\rm oracle}\right) \to \Theta_k((a_k^\star)^2/\alpha)$, yielding the result. 

Intuitively, this lower bound is tight because the effective signal-to-noise ratio for each index remains asymptotically equivalent to that of the original problem; specifically, the contribution of the other indices scales as $\Theta(1)$. \\

\paragraph{Theorem \ref{thm:spectral_rates} --} The proof (Appendix \ref{app:sec:spectral}) is based on an extension of the spectral methods analysis for multi-index models in \cite{defilippis2025optimalspectral, kovacevic25aspectral}. In particular, we show that, in the setting considered in the current manuscript, the eigenvectors of $\bT$ eq. \eqref{eq:def:spectral_method_matrix} corresponding to the $j$ largest eigenvalues, where $r$ is the number of spikes detached from the spectral bulk (see Definition \ref{def:spectral_estimator}), correlate with $\bw_1^\star,\ldots,\bw_r^\star$ individually. Furthermore, for large $k$, we show that the $k$-critical threshold and ${\rm MSE}_\gamma$ for the spectral estimator scale as the oracle estimator case considered in Appendix \ref{app:sec:lower-bound}.\\

Intuitively, this (constructive) upper bound matches the lower bound of the previous theorem because of the factorization of all indices. In multi-index models in full generality, the interaction between the direction can be arbitrary complicated, and the generic spectral methods more involved (See e.g. \cite{defilippis2025optimalspectral}). However, because of the hierarchical structure of the target, we can observe the direction one by one with a simpler spectral method sequentially without loosing too much information.

\paragraph{Theorem \ref{thm:readout} --} The argument ( detailed in Appendix \ref{app:sec:readout}) relies on a two-stage procedure: a spectral initialization followed by a random feature ridge regression (RFRR) on the retrieved subspace. First, leveraging the spectral analysis of the matrix $\bT$ defined in eq.~\eqref{eq:def:spectral_method_matrix}, we model the true weights $\bw^\star_i$ as a perturbation of the spectral estimator's retrieved directions $\hat{\bw}_i$ (for $i \leq r$). Next, we define the projected input $\bs = (\hat{\bw}_1,\ldots,\hat{\bw}_r)^\top\bx$. By Taylor-expanding the activation functions around these retrieved directions, the target decomposes into a 'clean' signal component and an effective noise term:
\begin{equation}
y \approx \sum_{k=1}^r a_k^\star g_k(s_k) + \sqrt{\Delta_{\rm eff}}\xi,
\end{equation}
where $\xi$ acts as a centered, unit-variance noise.
Crucially, the effective variance $\Delta_{\rm eff}$ captures three distinct error sources: (i) the intrinsic label noise $\Delta$; (ii) the contribution from the unlearned indices ($i > r$); and (iii) the error arising from the spectral approximation. Finally, we analyze the performance of RFRR trained on the projected inputs $\bs$. Invoking standard results on random feature regression \cite{rudirosasco2017}, we show that with sufficient over-parameterization ($m \asymp \sqrt{n}$), the RFRR estimation error decays as $O(n^{-1/2})$. Since this decay is faster than the rates governing both the spectral reconstruction and the unlearned signal components, the RFRR error is subleading. Consequently, the asymptotic excess risk is dominated solely by the excess effective noise $\Delta_{\rm eff} - \Delta$, yielding the matching rates.

\section{Conclusions, discussions, and limitations}
Our results suggest a unifying picture for representation learning and scaling laws in shallow neural networks.

First, we show that the scaling exponents previously observed for quadratic networks by \citet{defilippis2025scaling} are in fact \emph{universal}. Remarkably, the same rates coincide with classical minimax bounds for LASSO and matrix compressed sensing under quasi-sparse assumptions. This indicates that quadratic and diagonal models already capture the essential statistical structure governing feature learning in a much broader class of neural architectures. On a technical level, an importance difference is that the quadratic neural network analysis hold in the regime where $m_\star = O(d)$, while where $m_{\star}$ is taken to be large but small compared to $n,d$. Our results showing that these rates coincide suggest that the analysis can be extended beyond this limit, and closing this gap is an interesting open question for future work.

Second, our analysis reproduces several empirical phenomena observed in trained networks. After spectral (or Hessian-based) training, the learned weight matrices exhibit structured, non-random spectra closely resembling those reported in empirical studies of attention layers and deep networks. In particular, the resulting singular-value distributions display a non-trivial bulk and heavy-tailed behavior, consistent with observations across architectures and scales \cite{mahoney2019traditional,martin2021predicting,thamm2024random}. Our theory provides a first-principles explanation of how such spectral structure emerges from data and target geometry, and clarifies its role in generalization.

More broadly, we provide a solvable setting in which two key empirical phenomena—neural scaling laws and progressive feature emergence—arise simultaneously. As the sample size increases, learning proceeds through a sequence of sharp transitions, with new directions in representation space becoming detectable one after another. This produces plateaus and abrupt drops in error, mirroring observations in real neural networks
\cite{wei2022emergent,arora2023theory,kunin2025alternating,schaeffer2023emergent,ren2025emergence}.
Our results show that both scaling laws and emergence can be understood within a single, tractable theoretical framework.

An important open direction is to understand to what extent these optimal rates can be achieved by stochastic gradient descent. While recent works have analyzed SGD dynamics in related settings \cite{ren2025emergence,arous2025learning}, it remains an open question whether SGD can provably attain the Bayes-optimal scaling laws identified here without spectral initialization or explicit regularization. Addressing this question would further bridge the gap between optimal statistical theory and practical training dynamics. For quadratic problems, \cite{defilippis2024dimension} showed that full batch gradient descent can attain those rates with proper regularization.

Finally, we note that our analysis deliberately relies on simplifying assumptions—most notably isotropic Gaussian data, two-layer architectures, and an asymptotic high-dimensional limit—which are essential to obtain sharp, non-perturbative results and exact scaling predictions. In this sense, the assumptions underlying multi-index models are both a limitation and a strength: they isolate the minimal mechanisms governing representation learning while remaining analytically tractable. Despite these simplifications, the resulting theory reproduces a wide range of empirical phenomena observed in trained attention models, including progressive feature emergence, structured weight spectra, and optimal scaling behavior. We view this work as establishing a principled rigorous foundation on which more realistic data distributions, architectures, and training dynamics can be systematically incorporated in future work.

\section*{Acknowledgements}
We would like to thank Yatin Dandi, Vittorio Erba and Lenka Zdeborova for insightful discussions. BL and LD were supported by the French government, managed by the National Research Agency (ANR), under the France 2030 program with the reference ``ANR-23-IACL-0008'' and the Choose France - CNRS AI Rising Talents program. FK acknowledge funding from the Swiss National Science Foundation grants OperaGOST (grant number 200021 200390) and DSGIANGO (grant number 225837). This work was supported by the Simons Collaboration on the Physics of Learning and Neural Computation via the Simons Foundation grant ($\#1257412$).

\bibliography{main}
\bibliographystyle{abbrvnat}
\newpage
\appendix

\newpage
\appendix
\onecolumn
\section{Proof of Theorems \ref{theorem:bayes_rates} and \ref{thm:spectral_rates}}\label{app:sec:IT}

In this section, we prove the information-theoretic results stated in Theorem \ref{theorem:bayes_rates}. In particular, we derive matching bounds for the ${\rm MMSE}_\gamma$, defined \ref{def:MMSE}, and the $k$-critical threshold $\alpha_k$ (Definition \ref{def:k_critical}). In Appendix \ref{app:sec:lower-bound}, we analyze an oracle estimator that exploits additional information to achieve a weighted mean-squared error smaller than ${\rm MMSE}_\gamma$. Finally, in Appendix \ref{app:sec:spectral}, we complete the proof by characterizing the scaling laws and $k$-critical thresholds of the spectral estimator defined in \ref{def:spectral_estimator}, which simultaneously establishes the results in Theorem \ref{thm:spectral_rates}.

\subsection{Lower bound}\label{app:sec:lower-bound}
\begin{definition}[Oracle Estimator] 
    Consider a dataset $\cD = \{(\bx_i,y_i)\in\R^{ d}\times\R\}_{i\in[n]}$, where the labels are generated by a multi-index model $y\sim\sP(\cdot|\langle\bw^\star_1,\bx\rangle, \ldots,\langle\bw^\star_{\twidth},\bx\rangle)$, with weights $\bW^{\star}$ drawn from a distribution $\sP_{\bW}$. We define the Oracle Estimator as the matrix $\bW^{\rm oracle}\in\R^{m\times d}$ with rows
    \begin{equation}
        \bw_k^{\rm oracle} = \argmin_{\bw}\E\left[\|\bw_k\bw_k^\top-\bw_k^\star\bw_k^{\star\top}\|^2_F\,|\,\cD,\{\bw_h\}_{h\neq k}\right], k\in[\twidth].
    \end{equation}
\end{definition}
Analogously to the Bayes-optimal estimator case, the weighed mean-squared error ${\rm MSE}_\gamma(\bW^{\rm oracle})$ is lower bounded by
\begin{equation}
    {\rm MMSE}^{\rm oracle}_\gamma = \frac{1}{d^2}\sum_{k=1}^{\twidth}(a_k^\star)^2\E\left[\|\E[\bw_k\bw_k^\top|\cD,\{\bw^\star_h\}_{h\neq k}]-\bw_k^\star\bw_k^{\star\top}\|^2_F\right].
\end{equation}
As a consequence of the law of total variance
\begin{align}
    d^2\,{\rm mmse}_k&:= \E\left[\Cov\left(\bw_k\bw_k^\top\,|\,\cD\right)\right] \\
    &= \E\left[\Cov\left(\bw_k\bw_k^\top\,|\,\cD,\{\bw_h\}_{h\neq k}\right)\right] +\underbrace{ \E\left[\Cov_{\{\bw_h\}_{h\neq k}}\left(\E\left[\bw_k\bw_k^\top \,|\, \cD,\{\bw_h\}_{h\neq k}\right]\right)\right]}_{\geq0}\\
    &\geq d^2\, \E\left[\|\E[\bw_k\bw_k^\top|\cD,\{\bw^\star_h\}_{h\neq k}]-\bw_k^\star\bw_k^{\star\top}\|^2_F\right].
\end{align}
Therefore,
\begin{equation}
    {\rm MMSE}_\gamma^{\rm oracle} \leq {\rm MMSE}_\gamma.
\end{equation}
We now focus on the specific setting of our interest defined in Section \ref{sec:setting}. For each index $k\in[\twidth]$, conditioning on $\{\bw_h\}_{h\neq k}$, the problem of the optimal estimation of $\bw_k^\star$ becomes statistically equivalent to the Bayes-optimal estimation given a dataset  $\cD_k :=\{(\bx_\nu,\overline{y}_\nu)\}_{\nu\in[n]}$, where the labels are generated by the single-index model
\begin{equation}
    \overline{y}_i = a_k^\star g_k(\langle\bw_k^\star,\bx_i\rangle) + \sqrt{\Delta}\xi_i,
\end{equation}
where $\xi_i$ is the same label noise as in the original dataset. We can now characterize the oracle estimator using the results from the literature on {\it single-index models}, in particular \cite{barbieroptimal2019}. In Appendix \ref{app:sec:results_sindex} we show that the information-theoretic weak recovery threshold for single-index models is a monotonic decreasing function of the signal-to-noise ratio. Combined with Assumption \ref{assumption:gen_exp}, this implies that the sequence of $k$-critical thresholds $\alpha_k$ is bounded by strictly increasing functions.\footnote{If $g_k = g,\,\forall k$, the sequence $\alpha_k^{\rm oracle}$ itself is strictly increasing.} 
In particular, the first result in Corollary \ref{app:corollary:sindex_thresholds} implies that the sequence is bounded by
\begin{equation}
    \alpha_k^{\rm oracle} = \Theta\left((a_k^\star)^{-2}\right)  =\Theta(k^{2\gamma} m_\star^{(1-2\gamma)_+}),
\end{equation}
with $(x)_+=\max(0,x)$.
Further, denoting by $k_\alpha$ the largest index $k$ such that $\alpha > \alpha_k$, i.e. $k_\alpha = \max(\twidth,\Theta(\alpha^{1/(2\gamma)}))$ for $\gamma > 1/2$ or $k_\alpha = \max(\twidth,\Theta(\twidth^{-1+1/(2\gamma)}\alpha^{1/(2\gamma)}))$ for $\gamma <1/2$, and exploiting the second result in Corollary \ref{app:corollary:sindex_thresholds},
\begin{align}
 {\rm MMSE}_\gamma^{\rm oracle} &=  \left(\sum_{k=1}^{k_\alpha}+\sum_{ k=k_\alpha+1}^{\twidth}\right)(a_k^\star)^2\E\left[\|\E[\bw_k\bw_k^\top|\cD,\{\bw^\star_h\}_{h\neq k}]-\bw_k^\star\bw_k^{\star\top}\|^2_F\right]\\
 &\geq C\left( \sum_{k=1}^{ k_\alpha} \,\frac{(a_k^\star)^2}{(a_k^\star)^2\,\alpha} + \sum_{k= k_\alpha+1}^{\twidth}(a_k^\star)^2\right)  \label{app:eq:mmse_decomposition}\\
 &=\begin{cases}
     \Theta\left(\alpha^{-1+1/(2\gamma)}\right),&\gamma>1/2,\; \alpha \ll \twidth^{2\gamma}\\
     \Theta\left(\twidth/\alpha\right),&\gamma>1/2,\; \alpha \gg \twidth^{2\gamma}\\
     \Theta\left(1\right),&\gamma<1/2,\; \alpha \ll \twidth \\
     \Theta\left(\twidth/\alpha\right),&\gamma<1/2,\; \alpha \gg \twidth
     \end{cases}
\end{align}
Note that we have used 
\begin{equation}
    \sum_{k=k_\alpha + 1}^{\twidth}k^{-2\gamma }= \begin{cases}
        \Theta(k_\alpha^{1-2\gamma}),&\gamma > 1/2,\\
        \Theta(\twidth^{1-2\gamma}),&\gamma < 1/2.
    \end{cases}
\end{equation}
In eq. \eqref{app:eq:mmse_decomposition}, the first term is a lower bound to the (weighted) mean-squared error of weakly recovered features, while the second corresponds to the underfitting contribution of the unlearned ones.

\subsection{Upper bound -- (Theorem \ref{thm:spectral_rates})}\label{app:sec:spectral}

\begin{figure}
    \centering
    \includegraphics[width=0.7\linewidth]{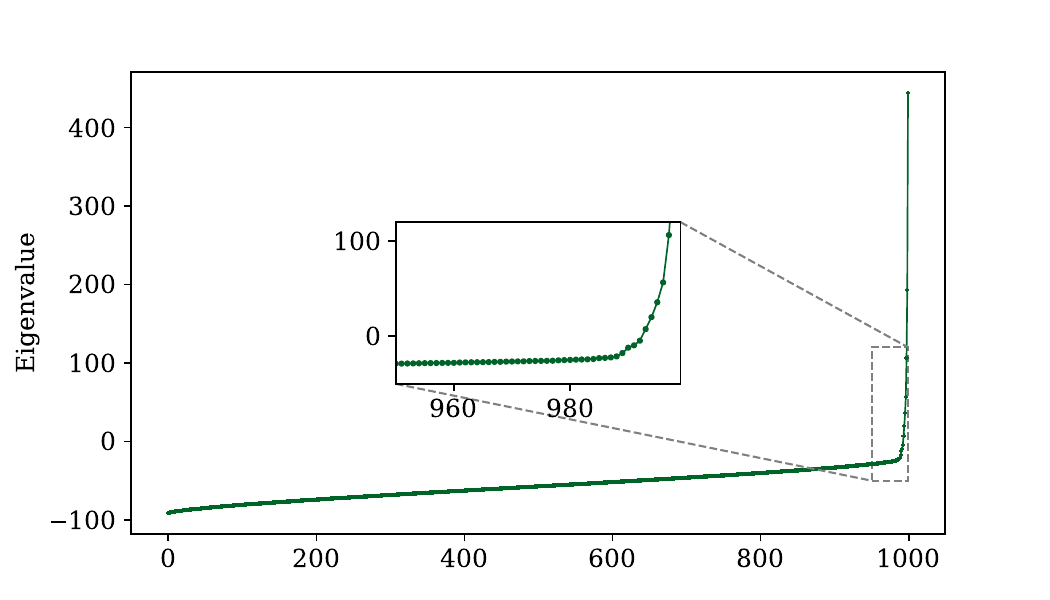}
    \caption{Empirical spectrum of $\bT$ eq. \eqref{eq:def:spectral_method_matrix} with preprocessing $\cT(y)=y/(1+|y|)$, for a hierarchical multi-index model with $g_k(z)=\frac{1}{2}\He_2(z)+\frac{1}{2\cdot4!}\He_4(z)$ and $\gamma = 1.3$. The covariates dimension is $d = 1000$, while the feature space dimension is $\twidth = 20$. The figure illustrates the change in scale of the eigenvalue gaps, transitioning from the informative spikes ($\Theta_d(1)$) to the uninformative bulk ($o_d(1)$). This behavior forms the basis of the selection method described in Def. \ref{def:spectral_estimator}.}
    \label{fig:elbow}
\end{figure}

By definition, any estimator that is a function of the dataset only has a weighted mean-squared error larger than ${\rm MMSE}_\gamma$. In this section we consider the spectral method defined in \ref{def:spectral_estimator}. The following Theorem, proven in \cite{kovacevic25aspectral}, is valid for generic Gaussian multi-index models with $\twidth$ indices. We refer to the original work for further details. In the rest of the section we denote by $\E$ the expected value with respect to $\bz\sim\cN(\bzero_{\twidth},\bI_{\twidth})$ and $Y\sim\cN\left(\sum_{k=1}^{\twidth}(a_k^\star)^2g_k(z_k),\,\Delta\right)$.

\begin{theorem}[Theorem 4.1 in \cite{kovacevic25aspectral}] \label{app:thm:kovacevic_spectral}
Let $\cT:\R\to\R$ be a preprocessing function subject to Assumption \ref{assumption:preprocessing} and $\bT$ defined as in eq. \ref{eq:def:spectral_method_matrix}. Let $t_1\geq t_2\geq\ldots\geq t_r\geq \tau$, for some $r\in[m]$, be all the solutions to
\begin{equation}\label{app:eq:kovacevic_self_consistent}
    \operatorname{det}\left(\alpha \E\left[\frac{(\bz\bz^\top-\bI_m)\cT(Y)}{t-\cT(Y)}\right] - \bI_p\right) = 0
\end{equation}
such that 
\begin{equation}
    t_k \geq \,\overline{t}_\alpha := \,\argmin_{t\geq \tau}\; \zeta_\alpha(t),\quad \forall k\in[j],
\end{equation}
where 
\begin{equation}
\zeta_\alpha(t):=t\left(1+\alpha\E\left[\frac{\cT(Y)}{t-\cT(Y)}\right]\right).
\end{equation}

    Then, denote $\lambda_1^{\bT},\ldots,\lambda^{\bT}_{\twidth}$ the largest $\twidth$ eigenvalues of $\bT$. For the top $r$ eigenvalues it holds that 
    \begin{equation}
        \lambda_1^{\bT},\ldots,\lambda^{\bT}_r \xrightarrow{\rm a.s.} \zeta_\alpha(t_1),\ldots,\zeta_\alpha(t_r),
    \end{equation}
    and for the remaining $\twidth-r$ eigenvectors it holds that 
    \begin{equation}
        \lambda_{r+1}^{\bT},\ldots,\lambda^{\bT}_{\twidth} \xrightarrow{\rm a.s.} \zeta_\alpha(\overline{t}_\alpha).
    \end{equation}
\end{theorem}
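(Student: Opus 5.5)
The plan is to follow the standard outlier (BBP-type) analysis for spiked random matrices, exploiting the fact that the labels depend on $\bx_i$ only through the $\twidth$ indices. First, rotate coordinates so that $\spn(\bW_\star)$ is spanned by the first $\twidth$ canonical directions. Write $\bx_i = \bW_\star^\top \bz_i/d + \bx_i^\perp$, where $\bz_i=\bW_\star\bx_i\sim\cN(\bzero,\bI_{\twidth})$ and $\bx_i^\perp$ is independent of $(\bz_i,y_i)$. Collecting the weights $D=\operatorname{diag}(\cT(y_i))$, which are bounded by Assumption \ref{assumption:preprocessing}, we get
\begin{equation}
\bT=\bX^{\perp\top} D\bX^{\perp}+\text{(terms of rank at most }2\twidth\text{)}.
\end{equation}
The bulk part $\bX^{\perp\top}D\bX^\perp$ is a weighted Wishart matrix whose weights are independent of $\bX^\perp$. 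By the Silverstein--Bai theory, its empirical spectrum converges to a deterministic law. The inverse of its Stieltjes transform is exactly $\lambda=\zeta_\alpha(t)$, and $t$ plays the role of the spectral parameter in the $n\times n$ companion resolvent. The right edge of the bulk is $\zeta_\alpha(\overline t_\alpha)$, the point where $\zeta_\alpha'(t)=0$ on $t\geq\tau$. Using the no-eigenvalue-outside-support results of Bai--Silverstein, together with Weyl interlacing for a finite-rank perturbation, at most $2\twidth$ eigenvalues can exit the bulk, and all others stick to the edge.

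Second, characterize the outliers through a Schur-complement (Woodbury) determinant identity. Take $\lambda$ outside the bulk and write $\lambda=\zeta_\alpha(t)$ with $t>\overline t_\alpha$. Then $\lambda$ is an eigenvalue of $\bT$ if and only if $\det \bM_n(\lambda)=0$ for an explicit $\twidth\times\twidth$ matrix of the form
\begin{equation}
\bM_n(\lambda)=\tfrac{1}{d}\bZ^\top D^{1/2}\bigl(\bI_n - D^{1/2}\bX^\perp(\bX^{\perp\top}D\bX^\perp-\lambda)^{-1}\bX^{\perp\top}D^{1/2}\bigr)D^{1/2}\bZ-\lambda\bI_{\twidth}.
\end{equation}
Next, apply a deterministic equivalent (isotropic local law) for the companion resolvent. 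This shows that $D^{1/2}(\cdots)D^{1/2}$ is close, in quadratic forms with vectors built from $\bZ$, to the diagonal matrix with entries $t\,\cT(y_i)/(t-\cT(y_i))$ up to normalization. The law of large numbers over $i$ then turns $\bM_n$ into a deterministic multiple of
\begin{equation}
\alpha\E\!\left[\frac{\bz\bz^\top\cT(Y)}{t-\cT(Y)}\right]-\Bigl(1+\alpha\E\!\left[\frac{\cT(Y)}{t-\cT(Y)}\right]\Bigr)\bI_{\twidth}.
\end{equation}
This is equivalent to eq.~\eqref{app:eq:kovacevic_self_consistent}, because the identity contribution cancels against $\zeta_\alpha(t)/t$. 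Hurwitz/Rouché-type arguments for the analytic functions $t\mapsto\det\bM_n$ on $(\overline t_\alpha,\infty)$ then give a.s.\ convergence of the top outliers to $\zeta_\alpha(t_1),\ldots,\zeta_\alpha(t_r)$, counted with multiplicity. Since $\zeta_\alpha$ is increasing there, the ordering is preserved. Roots with $t\le\overline t_\alpha$ do not produce outliers, so the remaining $\twidth-r$ top eigenvalues converge to the edge $\zeta_\alpha(\overline t_\alpha)$.

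The main obstacle is the deterministic-equivalent step. The weights $D$ are strongly correlated with $\bZ$, so the quadratic forms $\bZ^\top(\cdot)\bZ$ cannot be handled by a naive independence argument. My first approach is to condition on $(\bZ,\by)$, which leaves $\bX^\perp$ Gaussian and independent. I would then use an anisotropic local law for $\bX^\perp$ conditionally on $D$, showing that the relevant resolvent is asymptotically diagonal with entries depending only on $\cT(y_i)$. Uniformity in $\lambda$ up to the edge would be obtained via a net argument plus Lipschitz bounds away from the bulk. A secondary delicate point is the behaviour at $t\downarrow\tau$, where $\cT(Y)$ approaches its supremum. There I would use the boundedness of $\cT$ and the condition $\mathbb P(\cT(Y)=0)<1$ to guarantee $\overline t_\alpha>\tau$ is well defined, or treat the boundary case by a limiting argument.
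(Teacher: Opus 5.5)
The paper contains no proof of this statement. It is imported verbatim from \cite{kovacevic25aspectral} and used as a black box, so the comparison is between a citation and your reconstruction. Your reconstruction is correct and follows the standard Lu--Li/BBP-type route. The middle matrix of your Schur complement needs no square roots of $D$: it equals $D-D\bX^\perp(\bX^{\perp\top}D\bX^\perp-\lambda)^{-1}\bX^{\perp\top}D=-\lambda D(\bX^\perp\bX^{\perp\top}D-\lambda\bI_n)^{-1}$. Its isotropic deterministic equivalent is $\operatorname{diag}\bigl(t\cT(y_i)/(t-\cT(y_i))\bigr)$ with $t=-1/m(\lambda)$, where $m$ is the limiting Stieltjes transform of the bulk. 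This is the precise sense in which $t$ acts as a spectral parameter, and the Silverstein equation for $m$ is exactly $\lambda=\zeta_\alpha(t)$. The limit of $\bM_n$ is then $t\bigl(\alpha\E[(\bz\bz^\top-\bI)\cT(Y)/(t-\cT(Y))]-\bI\bigr)$, as you claim.

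The two routes buy different things. The citation gives brevity and the full rigor of the reference, including its eigenvector-overlap result (Theorem 4.2 there), which the paper uses immediately afterwards. Your route exposes the mechanism and shows exactly where $\twidth$ must be fixed: the law of large numbers for a $\twidth\times\twidth$ matrix, interlacing with a rank-$2\twidth$ perturbation, and Hurwitz with finitely many roots. That is worth keeping in mind, since the paper later applies the result with $\twidth\gg1$.

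Two comments on the difficulties you identify.

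First, the step you call the main obstacle is milder than you fear. After conditioning on $(\bZ,\by)$, the columns of $\bZ$ are deterministic test vectors and $\bX^\perp$ is an independent Gaussian matrix. Since $\lambda$ stays a fixed distance above the edge, a global isotropic deterministic equivalent suffices; no local law is needed. The $D$--$\bZ$ correlation then enters only through the i.i.d.\ law of large numbers.

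Second, the boundary case is not excluded by Assumption~\ref{assumption:preprocessing}, and the condition $\mathbb P(\cT(Y)=0)<1$ does not force $\overline t_\alpha>\tau$. Whenever $\alpha\,\E[\cT(Y)^2/(\tau-\cT(Y))^2]\le 1$, one has $\zeta_\alpha'>0$ on $(\tau,\infty)$. Then $\overline t_\alpha=\tau$, and the edge is $\zeta_\alpha(\tau)$ rather than a zero of $\zeta_\alpha'$. This happens for the paper's own choice $\cT(y)=y/(1+|y|)$ at small $\alpha$. There $\tau=1$ and $\cT(y)/(1-\cT(y))=y$ for $y\ge0$, so $\E[\cT(Y)^2/(1-\cT(Y))^2]\le\E[Y^2]+1/4<\infty$. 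Your limiting argument, with $\max_i\cT(y_i)\uparrow\tau$, is therefore genuinely needed. Also, since $\cT$ may change sign, the Bai--Silverstein no-outlier theorem must be invoked in a version valid for sign-indefinite weights, not the classical nonnegative-population form.
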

As a first result, we derive a bound for $\alpha_k$.

Consider the matrix in eq. \eqref{app:eq:kovacevic_self_consistent}
\begin{equation}
    \E\left[\frac{(\bz\bz^\top-\bI_m)\cT(Y)}{t-\cT(Y)}\right] = \E_Y\left[\bG(Y)\frac{\cT(Y)}{t-\cT(Y)}\right],
\end{equation}
with $\bG(y) = \E[\bz\bz^\top-\bI_m|Y=y]$.
It is straightforward to show that it is a diagonal matrix, due to the parity of each function $g_k$. Indeed, given $k\neq h$,
\begin{align}
   G_{kh}(y) =\E[z_kz_h|y] \propto \int e^{-\lVert\bz\rVert^2/2}\exp\left(-\left(y-\sum_{k=1}^m a_k g_k(z_k)\right)^2/(2\Delta)\right) z_h z_k \,\de \bz= 0.
\end{align}
For simplicity, we denote $G_k(y):=G_{kk}(y)$. Therefore, the solutions of eq. \eqref{app:eq:kovacevic_self_consistent} coincides with the solutions of 
\begin{equation}\label{app:eq:spectral_eigenvalue_self_cons}
    \alpha^{-1} = \E\left[\frac{\cT(y)}{t-\cT(Y)}G_{k}(Y)\right], \quad k\in[m].
\end{equation}
Further, note that, by definition $\overline{t}_\alpha$ is the solution of
\begin{equation}
    \alpha^{-1} = \E\left[\left(\frac{\cT(Y)}{t-\cT(Y)}\right)^2\right].
\end{equation}
A spectral transition occurs if, for sample complexity $\alpha$ and index $k\in[m]$, the value $\overline{t}_\alpha$ is also solution of eq. \eqref{app:eq:spectral_eigenvalue_self_cons}. Indeed by Theorem 4.2 in \cite{kovacevic25aspectral} -- which characterizes the overlap of the principal $r$ eigenvalues, such sample complexity corresponds to the $k$-critical threshold $\alpha_k$. This implies, for $\alpha=\alpha_k^{\rm sp}$
\begin{equation}\label{ap:eq:system_spectral_critical}
    \begin{cases}
        (\alpha^{\rm sp}_k)^{-1} = \E_y\left[\frac{\cT(Y)}{t-\cT(Y)}G_{k}(Y)\right]\\
        (\alpha_k^{\rm sp})^{-1} = \E_y\left[\left(\frac{\cT(Y)}{t-\cT(Y)}\right)^2\right].
    \end{cases}
\end{equation}

In order to characterize the threshold, we consider the following expansions for small SNR $a_k^\star\ll1$. Define $\sZ$ the marginal distribution of $Y$ 
\begin{align}
    \sZ(y) &:= \frac{1}{\sqrt{2\pi\Delta}}\E_{\bz}\left[\exp\left(-\left(y-\sum_{k=1}^{\twidth}a_k^\star g_k(z_k)\right)^2/(2\Delta)\right)\right].\\
\end{align}
Note that $\sZ\in\mathcal C^\infty$ and, applying the definition of the probabilist's Hermite polynomials, its $j^{\rm th}$ derivative reads
\begin{align}\label{app:eq:derivative_Z_spectral}
    \frac{\de^j}{\de y^j}\sZ(y) = \frac{(-1)^j}{\sqrt{2\pi\Delta^{j+1}}}\E_{\bz}\left[e^{-\left(y-\sum_{k=1}^{\twidth}a_k^\star g_k(z_k)\right)^2/(2\Delta)}\He_k\left(\frac{y-\sum_{k=1}^{\twidth}a_k^\star g_k(z_k)}{\sqrt{\Delta}}\right)\right].
\end{align}
We now aim to approximate integrals of the type $\E_Y[h(Y)G_k(Y)]$, where $h:\R\to\R$ is bounded, that are involved in eq. \eqref{app:eq:kovacevic_self_consistent}. In particular, we derive a bound for 
\begin{equation}
    \mathcal E_k :=  \E[h(Y)G_k(Y)] - a_k^\star\E_z[g_k(z)(z^2-1)]\int\de yh(y)\sZ'(y).
\end{equation}
In the following, we leverage the Taylor's expansion of the complex exponential, ensuring that, for all $\omega\in\R$, $k\in\mathbb N$, there exists a constant $c\in[0,\omega]$ such that\footnote{The following expression is obtained as the integral form of the remainder in Taylor's theorem applied to the real functions $\sin(\omega)$ and $\cos(\omega)$ in $e^{i\omega} = \cos(\omega)+i\sin(\omega)$.}
\begin{align}\label{app:eq:remainder}
    R_k(\omega)&:= e^{i \omega}-\sum_{j=0}^k\frac{(i \omega)^j}{j!},\qquad R_k(\omega) = \frac{(i\omega)^{k+1}}{k!}\int_0^1 e^{i\omega t}(1-t)^k\de t
\end{align}
In addition, we use the Fourier representation of the Gaussian density $(2\pi)^{-1/2}e^{-x^2/2}=(2\pi)^{-1}\int\de\omega e^{i\omega x -\omega^2/2}$ and the identity
\begin{equation}\label{app:eq:fourier_derivative}
    \frac{1}{2\pi}\int \de \omega e^{i\omega x -\omega^2}(i\omega)^j = \frac{1}{2\pi}\frac{\de^j}{\de x^j}\int \de \omega e^{i\omega x -\omega^2} = (-1)^j\frac{e^{-x^2/2}}{\sqrt{2\pi}}\He_j(x).
\end{equation}
Define the shorthand $\rho_k^{jj'} := \E_z[g_k^{j}(z)\He_{j'}(z)]$, for $j,j'\in\{0,1,2\}$, which are non-zero and finite by Assumption \ref{assumption:gen_exp}. In order to simplify the following equations, let $\E_{\bz_{\setminus k}}$ denote the expected value with respect to $\{z_h\}_{h\neq k}$ and $S_k(\bz_{\setminus k}):= \sum_{h\neq k}a_h^\star g_h(z_h)$. Then, we have
\begin{align}
     \mathcal E_k =& \int \de y h(y)\E_{\bz} \left[\mathsf P(y|\bz)(z_k^2-1)-a_k^\star\rho_k^{12} \partial_y\sP(y|\bz)\right]\\
    \overset{\text{\eqref{app:eq:fourier_derivative}}}{=}&\E_{\bz_{\setminus k}}\frac{1}{2\pi}\int \de y h(y)\int\de \omega e^{ i \omega (y-S_k(\bz_{\setminus k})) - \omega^2\Delta/2}  \E_z\left[ {e^{-i\omega a_k^\star g_k(z)}}(z^2-1)\right] \\
    &+ \frac{a_k^\star \rho_k^{12}}{2\pi}\E_{\bz_{\setminus k}}\int \de y h(y)\int \de \omega (i\omega)e^{ i \omega (y-S_k(\bz_{\setminus k})) - \omega^2\Delta/2} \E_z[e^{-i\omega a_k^\star g_k(z)}]\\
    =&\frac{1}{2\pi}\E_{\bz_{\setminus k}}\int \de y h(y)\int\de \omega e^{ i \omega (y-S_k(\bz_{\setminus k})) - \omega^2\Delta/2}  \underbrace{\E_z\left[ (1 - i\omega a_k^\star g_k(z) + R_1(\omega a_k^\star g_k(z)))(z^2-1)\right] }_{0 -i\omega a_k^\star \rho_k^{12} + \E_z[R_1(\omega a_k^\star g_k(z)))(z^2-1)]}\\
    &+ \frac{a_k^\star \rho_k^{12}}{2\pi}\E_{\bz_{\setminus k}}\int \de y h(y)\int \de \omega (i\omega)e^{ i \omega (y-S_k(\bz_{\setminus k})) - \omega^2\Delta/2} \E_z[1 + R_0(\omega a_k^\star g_k(z)))].
\end{align}
After the cancellation of the terms in $a_k^\star$, substituting the integral form of the remainder in eq. \eqref{app:eq:remainder}
\begin{align}
      \mathcal E_k=&\frac{(a_k^\star)^2}{2\pi}\E_{\bz}g_k(z_k)(z_k^2-1)\int_0^1 \de t (1-t)\int \de y h(y)\int\de \omega (i\omega)^2e^{ i \omega (y-S_k(\bz_{\setminus k})) - \omega^2\Delta/2}  e^{i t\omega a_k^\star g_k(z_k)} \\
    &+ \frac{(a_k^\star)^2 \rho_k^{12}}{2\pi}\E_{\bz}g_k(z_k) \int_0^1 \de t\int\de y h(y)\int \de \omega (i\omega)^2e^{ i \omega (y-S_k(\bz_{\setminus k})) - \omega^2\Delta/2} e^{i t\omega a_k^\star g_k(z_k)} \\
    \overset{\text{\eqref{app:eq:fourier_derivative}}}{=}& \frac{(a_k^\star)^2}{\Delta\sqrt{2\pi\Delta}}\E_{\bz}g_k^2(z_k)(z_k^2-1)\int_0^1 \de t (1-t)\int \de y h(y) e^{-(y-S_k(\bz_{\setminus k}) + t a_k^\star g_k(z_k))^2/(2\Delta)}\He_2\left(\frac{y-S_k(\bz_{\setminus k}) + t a_k^\star g_k(z_k)}{\sqrt{\Delta}}\right)\\
    &+ \frac{(a_k^\star)^2 \rho_k^{12}}{\Delta\sqrt{2\pi\Delta}}\E_{\bz}g_k(z_k) \int_0^1 \de t\int\de y h(y)e^{-(y-S_k(\bz_{\setminus k}) + t a_k^\star g_k(z_k))^2/(2\Delta)}\He_2\left(\frac{y-S_k(\bz_{\setminus k}) + t a_k^\star g_k(z_k)}{\sqrt{\Delta}}\right)\\
    =& \frac{(a_k^\star)^2}{\Delta\sqrt{2\pi}}\E_{\bz}g_k^2(z_k)(z_k^2-1)\int_0^1 \de t (1-t)\int \de y h\left(\sqrt{\Delta}y+S_k(\bz_{\setminus k}) - t a_k^\star g_k(z_k)\right) e^{-y^2/2}\He_2\left(y\right)\\
    &+ \frac{(a_k^\star)^2 \rho_k^{12}}{\Delta\sqrt{2\pi}}\E_{\bz}g_k(z_k) \int_0^1 \de t\int\de y h\left(\sqrt{\Delta}y+S_k(\bz_{\setminus k}) - t a_k^\star g_k(z_k)\right) e^{-y^2/2}\He_2\left(y\right).
\end{align}
Recalling that $h$ is a bounded function, we obtain
\begin{align}
    |  \mathcal E_k| \leq&\sup_{y\in\R}|h(y)| \frac{(a_k^\star)^2}{\Delta\sqrt{2\pi}}  \int \de y e^{-y^2/2}|\He_2(y)|\left(\frac{\rho_k^{22}}{2} + \rho_k^{12}\rho_k^{10}\right).
\end{align}
Note that, by Assumption \ref{assumption:gen_exp}, the constants $\rho_k^{jj'}$ do not scale with $k$, therefore $|  \mathcal E_k| = O((a_k^\star)^2)$.
We now look for $t>\tau$ and $\alpha$ that are solutions of eqs. \eqref{ap:eq:system_spectral_critical}. In particular, the oracle $k$-critical threshold in Appendix \ref{app:sec:lower-bound} is a lower bound to the spectral one, therefore $\alpha_k^{\rm sp} = \Omega(\alpha_k^{\rm oracle})$ and diverges with $k$. The second equation of \eqref{ap:eq:system_spectral_critical}, due to the monotonicity of the RHS, implies that also the solution $t$ must diverge with $k$. Using the bound we have derived for $\mathcal E_k$,
\begin{equation}\label{app:eq:spectral_transition_system_approx}
    \begin{cases}
        (\alpha^{\rm sp}_k)^{-1} = t^{-1}a_k^\star \E_z[g_k(z)(z^2-1)]\int\de y\cT(y)\sZ'(y) + O(a_k^\star t^{-2} + (a_k^\star)^2t^{-1})\\
        (\alpha_k^{\rm sp})^{-1} = t^{-2}\E[\cT(y)^2] + O(t^{-3}).
    \end{cases}
\end{equation}
In the previous, we have used 
\begin{align}
\left|\int\de y\left(\frac{\cT(y)\sZ'(y)}{t-\cT(y)} - t^{-1}\cT(y)\sZ'(y)\right)\right| &= t^{-2}\left|\int\de y\frac{\cT^2(y)\sZ'(y)}{1-\cT(y)/t}\right| \\&\leq t^{-2} \sup_{y\in\R}\left(\frac{\cT^2(y)}{1-\cT(y)/t}\right)\int\de y|\sZ'(y)| \\
&\overset{\eqref{app:eq:derivative_Z_spectral}}{=} t^{-2} \sup_{y\in\R}\left(\frac{\cT^2(y)}{1-\cT(y)/t}\right) \int \de y \frac{e^{-y^2/2}}{\sqrt{2\pi\Delta}}|y|= O(t^{-2}),
\end{align}
where, for $t$ sufficiently large, the supremum is bounded by a constant independent of $t$.
The system \eqref{app:eq:spectral_transition_system_approx} is thereby solved for 
\begin{equation}
    t^{-1} =\Theta(a_k^\star),\qquad \alpha_k^{\rm sp} = \Theta(\alpha_k^{\rm oracle}).
\end{equation}

We can now derive the scaling laws for the spectral weighted mean-squared error. Taking $\alpha\gg\alpha_k^{\rm sp}$, for $k$ large enough, the solution $t_k$ of eq. \eqref{app:eq:spectral_eigenvalue_self_cons} scales as $t_k = \Theta(\alpha a_k^\star)$. As a consequence of Theorem 4.2 in \cite{kovacevic25aspectral}, specialized in our setting -- where we can exploit the diagonality of $\bG(y)$, we have that the overlap between the eigenvector $\bv_k$ corresponding to the eigenvalue $\lambda_k^{\bT}$, and the true signal $\bw_h^\star$, converges, in the high-dimensional limit, to an overlap
\begin{equation}
    m_{hk}^2 := \frac{1}{d^2}\langle\bw_h^\star,\bv_k\rangle^2 = \delta_{kh} \frac{\zeta'_\alpha(t_k)}{\zeta'_\alpha(t_k) + \frac{\de}{\de t}R_k(t_k)},
\end{equation}
where $R_k(t) = \E[z_k^2 \cT(y)/(t-\cT(y))]$. For $\alpha\gg\alpha_k^{\rm sp}$ we have that $t_k\gg 1$ and eq. \eqref{app:eq:spectral_eigenvalue_self_cons} is solved by $t_k = \Theta(a_k^\star \alpha)$. Putting all together
\begin{equation}
  m_{hk}^2  = \delta_{kh}\left(1 - \Theta\left(\frac{(\alpha_k^\star)^{-2}}{\alpha}\right) \right).
\end{equation}
Recall that
\begin{equation}
    {\rm mse}_k(\bv_k) = \frac{1}{d^2}\E\left[\|\bv_k\bv_k\top\|^2_{\rm F}  + \|\bw_k^\star\bw_k^{\star\top}\|^2_{\rm F} -2 \langle\bw_k^\star,\bv_k\rangle^2\right] \to 2(1-m_{kk}^2).
\end{equation}
By repeating computations analogous to the ones in Appendix \ref{app:sec:lower-bound}, one finds, setting $\hat{\bW} = (\bv_1,\ldots,\bv_r)^\top$,
\begin{equation}
    {\rm MMSE}_\gamma(\hat{\bW} := (\hat{\bW}^{\rm sp},\bzero_{(\twidth-r)\times d})) = \Theta({\rm MMSE}_\gamma^{\rm oracle}),
\end{equation}
which proves Theorem \ref{theorem:bayes_rates} and \ref{thm:spectral_rates}.

\section{Proof of Theorem \ref{thm:readout}}\label{app:sec:readout}
In this section we prove Theorem \ref{thm:readout}, stating that a two-layer neural network, trained according to Algorithm \ref{alg:network_training}, can achieve an excess risk bottlenecked by the information-theoretic error on the recovery of the representations $\bW_\star$. Denote $\hat{\bW}^{\rm sp}\in\R^{r\times d}$ the spectral estimator defined in \ref{def:spectral_estimator}. In Appendix \ref{app:sec:spectral} we have shown that each column $k$ correlates with the signal component $\bw_k^\star$ only, with squared overlap $m_k = 1 - \Theta((\alpha_k^\star)^{-2}/\alpha)$. Denote $\bv_k$ as the $k^{\rm th}$ column of the spectral estimator and $\eta_k^2 := (\alpha_k^\star)^2/\alpha\ll 1$. Up to negligible  corrections $O(\eta_k^2)$ 
\begin{equation}
    \bw_k^\star = \bv_k + \eta_k \bxi_k,
\end{equation}
with $\bxi_k$ a unit vector orthogonal to all $\{\bv_h\}_{h\in[r]}$. 
Given a covariate $\bx\in\R^d$, define the projected inputs $\bs\in\R^r$ such that $s_k = \langle\bv_k,\bx\rangle$, $k\in[r]$. The first layer pre-activations are given by  
\begin{equation}
    \bW\bx = \bZ\hat{\bW}^{\rm sp}\bx = \bZ\bs.
\end{equation}
Similarly, up to negligible corrections $O(\eta_k^2)$, the target function
\begin{align}
    f_\star(\bx) &= \sum_{k=1}^r a_k^\star g_k(s_k + \eta_k\langle\bxi_k,\bx\rangle) + \sum_{k=r+1}^{\twidth}a_k^\star g_k(\langle\bw_k^\star,\bx\rangle)\\
    &= \sum_{k=1}^r a_k^\star g_k(s_k )  + \sum_{k=1}^r a_k^\star g_k'(s_k )\eta_k\langle\bxi_k,\bx\rangle + \sum_{k=r+1}^{\twidth}a_k^\star g_k(\langle\bw_k^\star,\bx\rangle).
\end{align}
Due to the orthogonality between $\bxi_k$ and $\bv_h$, and $\bx\sim\cN(\bzero,d^{-1}\bI_d)$, for any $h,k\in[r]$, the variables $\langle\bx,\bxi_k\rangle$ and $s_h$ are independent centered Gaussian variables. With respect to the projected input space, the effective target function is 
\begin{align}
    f_\star^{\rm eff}(\bs) &:= \E[f_\star(\bx)|\bs] = \sum_{i=1}^r a_k^\star g_k(s_k).
\end{align}
Then, the readout training in Algorithm \ref{alg:network_training} corresponds to random feature ridge regression \cite{rahimirecht} with weights $\bZ = (\bz_1,\ldots,\bz_p)^\top$ on the projected covariates $\bs_i = \hat{\bW}^{\rm sp}\bx_i^{(2)}$, $i\in[n]$, where $\{\bx_i^{(2)}\}_{i\in[n]}$ are the covariates in the dataset $\cD_2$:\footnote{In order to simplify the notation, we denote the labels in $\cD_2$ as $y_i := y_i^{(2)}$.}
\begin{equation}
    \hat{\ba} = \argmin_{\ba\in\R^p}\frac{1}{2n}\sum_{i=1}^n\left(y_i - \sum_{j=1}^p a_j \sigma(\langle\bz_j,\bs_i\rangle)\right)^2 + \frac{\lambda}{2}\|\ba\|^2
\end{equation}
 
Since $\sigma$ is bounded and continuous, and the effective label noise is sub-exponential, as it is given by a polynomial of Gaussian random variables,
we can apply Theorem 1 in \cite{rudirosasco2017}\footnote{Note that the assumption of bounded outputs in Theorem 1 is relaxed in the Appendix of \cite{rudirosasco2017}, including settings with sub-exponential noise, see Assumption 4. Further, the analysis in this work allows for a more refined error rate in our setting. However, the result we obtain is already subleading with respect to the rates in Theorem \ref{theorem:bayes_rates}.}, choosing $p = \omega(n^{1/2})$, $\lambda = \Theta(n^{-1/2})$, so that we obtain that the risk eq. \eqref{eq:def:risk} $R(\hat\ba,\bW)$ satisfies

\begin{equation}
    R(\hat\ba,\bW) - R_{f_{\cH}} = O(n^{-1/2}),
\end{equation}
where, given the {\it reproducing kernel Hibert space } (RKHS) $\cH$ associated to the kernel $K(\bs,\bs') = \E_{\bz,b}[\sigma(\bs^\top\bs+b)\sigma(\bw^\top\bs'+b)]$,
\begin{equation}
    R_{f_{\cH}} = \min_{f\in\cH}\E_{y,\bs}[(f(\bs)-y)^2].
\end{equation}
Such irreducible risk is lower-bounded by 
\begin{align}
    R_\star &= \E[(f_\star^{\rm eff}(\bs) - y)^2] \\
&= \E\left[\left(\sum_{k=1}^r a_k^\star g_k'(s_k )\eta_k\langle\bxi_k,\bx\rangle + \sum_{k=r+1}^{\twidth}a_k^\star g_k(\langle\bw_k^\star,\bx\rangle)\right)^2\right]
    \\&= \Theta\left( \sum_{k=1}^r (a_k^\star)^2\eta_k^2 + \sum_{k=r+1}^{\twidth}(a_k^\star)^2\right) = \Theta({\rm MMSE}_\gamma),
\end{align}
where, as for the spectral method mean-squared error, the two terms correspond to the approximation error of learned features and underfitting of unlearned ones.
If $f^{\rm eff}_\star\in\cH$, the argument is complete. By assumption, the function $\sigma$ allows for a decomposition in Hermite polynomials
\begin{equation}
    \sigma(z) = \sum_{\beta\geq0}\frac{\sigma_\beta}{\beta!}\He_\beta(z),\quad \sigma_\beta := \frac{1}{\beta!}
    \E_z[\He_\beta(z)\sigma(z)].
\end{equation}
Further, being bounded, it cannot be a polynomial of finite degree, as there is no $\beta_0$ such that $\sigma_\beta = 0$ for all $\beta\geq\beta_0$, excluding the trivial case of constant functions. Considering the Taylor expansion of Hermite polynomials
\begin{equation}
    \He_\beta(z+b) = \sum_{\zeta=0}^\beta \binom{\beta}{\zeta}b^{\beta-\zeta}\He_\zeta(z),
\end{equation}
with $\binom{\beta}{\zeta} = \frac{\beta!}{\zeta!(\beta-\zeta)!}$, we find that the kernel $K$ is given by
\begin{align}
    K(\bs,\bs') &= \E_{\bz,b}[\sigma(\bz^\top\bs+b)\sigma(\bz^\top\bs'+b)]\\
    &= \E_{\bz,b}\sum_{\beta,\beta'\geq 0}\sum_{\zeta=0}^{\beta}\sum_{\zeta'=0}^{\beta'} \frac{\sigma_\beta\sigma_{\beta'}}{\zeta!\zeta'!(\beta-\zeta)!(\beta'-\zeta')!}b^{\beta-\zeta}b^{\beta'-\zeta'}\He_\zeta(\langle\bz,\bs\rangle)\He_{\zeta'}(\langle\bz,\bs'\rangle)
\end{align}
By Proposition 11.31 in \cite{O’Donnell_2014},
\begin{align}
    K(\bs,\bs') &= \E_{b}\sum_{\beta,\beta'\geq 0}\sum_{\zeta=0}^{\min(\beta,\beta')} \frac{\sigma_\beta\sigma_{\beta'}}{\zeta!(\beta-\zeta)!(\beta'-\zeta)!}b^{\beta+\beta'-2\zeta}\frac{\langle\bs,\bs'\rangle^\zeta}{r^\zeta}\\
    &=\E_{b}\sum_{\zeta=0}^{\infty} {\sum_{\beta,\beta'\geq \zeta}\frac{\sigma_\beta\sigma_{\beta'}}{\zeta!(\beta-\zeta)!(\beta'-\zeta)!}b^{\beta+\beta'-2\zeta}}\frac{\langle\bs,\bs'\rangle^\zeta}{r^\zeta}\\
    &=.\sum_{\zeta=0}^{\infty}\underbrace{\E_{b}\left( {\sum_{\beta\geq \zeta}\frac{\sigma_\beta}{\zeta!(\beta-\zeta)!}b^{\beta-\zeta}}\right)^2}\frac{\langle\bs,\bs'\rangle^\zeta}{r^\zeta}
\end{align}
 The bracketed term is always strictly positive as the polynomial $\sum_{\beta\geq \zeta}\frac{\sigma_\beta}{\zeta!(\beta-\zeta)!}b^{\beta-\zeta}$ in $b$ cannot be identically zero. In fact, this would require that $\sigma_\beta = 0$ for all $\beta>\zeta$, which would imply that $\sigma$ is a polynomial, contradicting the hypothesis of boundedness. Expanding the term $\langle\bs,\bs'\rangle$, it follows that the kernel admits a decomposition as $K(\bs,\bs')=\Phi(\bs)^\top\Phi(\bs')$, where $\Phi$ is a feature map with components given by multivariate monomials $\lambda_{\bbeta}s_1^{\beta_1}\ldots s_r^{\beta_r}$, for some $\lambda_{\bbeta}>0$ for all degrees $|\bbeta|$. Theorem 4.21 in \cite{christmann2008support} readily implies that the RKHS of $K$ includes all finite degree polynomials, completing our argument.
\section{Useful results on single-index models}\label{app:sec:results_sindex}
In this section we present various results applied in the derivation of the lower bound to ${\rm MMSE}_\gamma$ in Appendix \ref{app:sec:lower-bound}. In particular, we focus on the following single-index model setting.
\begin{definition}\label{app:def:single_index} Let $\bw^\star\sim\cN(\bzero_d,\bI_d)$, $g:\R\to\R$ satisfy Assumption \ref{assumption:gen_exp}. Consider the supervised learning problem of estimating $\bw_\star$ from $n$ i.i.d. observations $\cD = \{(\bx_i,y_i)\in\R^{d\times 1}:i\in[n]\}$ generated as
\begin{equation}
    \bx_i\sim\cN(\bzero,\bI_d),\qquad y_i = \sqrt{\lambda}g(\langle\bx_i,\bw^\star\rangle)+\xi_i,
\end{equation}
where $\lambda\geq 0$ is the signal-to-noise ratio (SNR) and $\xi_i\sim\cN(0,1)$ is additive noise.
\end{definition}
Given an estimator $\hat{\bw}$ of $\bw^\star$ that is a function of the dataset, we evaluate its estimation error using the following matrix-MMSE:
\begin{equation}
    {\rm mse}(\bw) := \frac{1}{d^2}\E[\|\bw\bw^\top-\bw^\star\bw^{\star\top}\|^2_F],
\end{equation}
where $\E$ computes the expected value with respect to the joint distributions of $\cD$ and $\bw_\star$. A lower bound to this quantity is given by the following {\rm optimal} matrix-MSE
\begin{equation}\label{app:eq:mmse-sindex}
    {\rm mmse} := \frac{1}{d^2}\E\left[\|\E[\bw\bw|\cD]-\bw^\star\bw^{\star\top}\|_F^2\right] = \argmin_{\bQ\in\R^{\d\times d}}\E\left[\|\bQ - \bw^\star\bw^{\star\top}\|^2_F\right].
\end{equation}
Intuitively, such optimal estimation error decreases with the SNR. 
\begin{lemma} \label{app:lemma:MMSE_SNR}
    In the setting defined in \ref{app:def:single_index}, the optimal matrix-MSE satisfies
    \begin{equation}
        \frac{\partial}{\partial \lambda}{\rm mmse} < 0.
    \end{equation}
\end{lemma}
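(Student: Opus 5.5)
The plan is to prove the monotonicity through an I-MMSE type identity for the Gaussian channel $y_i=\sqrt{\lambda}\,g(\langle\bx_i,\bw^\star\rangle)+\xi_i$, coupled with a degradation (data-processing) argument. The matrix-MMSE in \eqref{app:eq:mmse-sindex} is the minimal quadratic risk for estimating $\bw^\star\bw^{\star\top}$. So it suffices to show that the observation at SNR $\lambda'$ can be obtained from the one at SNR $\lambda>\lambda'$ by adding independent noise, and then to upgrade weak monotonicity to strict monotonicity via a derivative formula.

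\textbf{Step 1 (weak monotonicity by degradation).} For $\lambda>\lambda'$, write $y_i'=\sqrt{\lambda'/\lambda}\,y_i+\sqrt{1-\lambda'/\lambda}\,\tilde\xi_i$ with $\tilde\xi_i$ i.i.d.\ standard Gaussians independent of everything else. Then $(\bx_i,y_i')$ has exactly the law of the model at SNR $\lambda'$. The Markov chain $\bw^\star\to\cD_\lambda\to\cD_{\lambda'}$ and the fact that conditioning reduces the expected conditional covariance (the law of total variance, as already used in Appendix \ref{app:sec:lower-bound}) give ${\rm mmse}(\lambda)\le{\rm mmse}(\lambda')$.

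\textbf{Step 2 (derivative formula).} Condition on $\bX$ and $\bw^\star$, and let $\bu\in\R^n$ have entries $u_i=g(\langle\bx_i,\bw^\star\rangle)$. The channel is then $\by=\sqrt{\lambda}\bu+\bxi$, so the I-MMSE relation of Guo--Shamai--Verd\'u gives
\[
\partial_\lambda I(\bw^\star;\by\mid\bX)=\tfrac12\,\E\|\bu-\E[\bu\mid\cD]\|^2 .
\]
Differentiating once more, the known second-order identity yields
\[
\partial_\lambda\E\|\bu-\E[\bu\mid\cD]\|^2=-\E\|\Cov(\bu\mid\cD)\|_F^2 .
\]
For the matrix-MMSE I would use the analogous identity for a general function of the signal. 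With $F=\bw^\star\bw^{\star\top}$ one has
\[
\partial_\lambda\E\|F-\E[F\mid\cD]\|_F^2=-\frac{1}{\lambda}\,\E\sum_i \bigl\|\Cov(F,\sqrt{\lambda}u_i\mid\cD)\bigr\|_F^2 .
\]
This follows from Gaussian integration by parts on $\partial_\lambda$ of the posterior density, which is $\propto\exp(\sum_i\sqrt{\lambda}u_iy_i-\lambda u_i^2/2)$. The right-hand side is manifestly $\le0$, which recovers Step 1 and gives the sign.

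\textbf{Step 3 (strictness, the main obstacle).} It remains to show that the posterior cross-covariance $\Cov(\bw\bw^\top,u_i\mid\cD)$ is not a.s.\ zero. Here I would use Assumption \ref{assumption:gen_exp}: $g$ is nonconstant with $\E[g'']\neq0$, so $u_i$ depends genuinely on $(\langle\bx_i,\bw\rangle)^2$, and hence on $\bw\bw^\top$. First try $\lambda\to0$, where the posterior equals the prior and the covariance reduces to $\Cov_{\bw}(\bw\bw^\top,g(\langle\bx_i,\bw\rangle))$. The $\He_2$ component of $g$ makes this a nonzero multiple of $\bx_i\bx_i^\top-\bI/d$-type terms. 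For general $\lambda$ I would argue by contradiction: if the covariance vanished a.s., the posterior would make $u_i$ and $F$ uncorrelated for every dataset. The posterior has full support, being Gaussian prior times a positive likelihood, so this equality would extend by analyticity in $\by$ to the prior case, which has just been shown to be false. Making this analyticity/full-support argument rigorous, uniformly in finite $d$, is where I expect most of the work to lie.
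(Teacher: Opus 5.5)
Your Steps 1--2 are correct, and Step 2 is the paper's argument. The paper writes ${\rm mmse}=\E\|\bw\bw^\top\|_F^2-\E\|\E[\bw\bw^\top\mid\cD]\|_F^2$ and differentiates the second term. Using Stein's lemma and $\nabla_{\by}\E[\bw\bw^\top\mid\cD]=\sqrt{\lambda}\,\Cov(\bw\bw^\top,g(\bX\bw)\mid\cD)$, it arrives at your formula $\partial_\lambda{\rm mmse}=-\E\sum_i\|\Cov(\bw\bw^\top,u_i\mid\cD)\|_F^2$, up to the factor $d^{-2}$. Step 1 is redundant given this, though it is a clean derivative-free proof of weak monotonicity. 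The paper then reads the strict sign directly off the squared norm. You are right that this only gives $\le 0$, so the strict claim rests entirely on your Step 3, and the mechanism you propose there does not work.

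The problem is that the prior is the $\lambda=0$ posterior, and varying $\by$ at fixed $\lambda>0$ never reaches it (at $\by=0$ you get the prior tilted by $e^{-\lambda\sum_i u_i^2/2}$). So ``extending by analyticity in $\by$ to the prior case'' has no target. Analyticity in $\lambda$ would only rule out isolated zeros of the derivative. That gives strict decrease of ${\rm mmse}$, but not $\partial_\lambda{\rm mmse}<0$ at every $\lambda$.

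The reduction to the prior should instead go through the identity you already have, $\partial_{y_i}\E[F\mid\bX,\by]=\sqrt{\lambda}\,\Cov(F,u_i\mid\bX,\by)$ with $F=\bw\bw^\top$. Suppose the derivative vanished. Then for a.e.\ $\bX$ all these covariances vanish for a.e.\ $\by$, hence for every $\by$, since they are continuous in $\by$. So $\E[F\mid\bX,\by]$ is constant in $\by$ and equals $\E[F\mid\bX]=\E[F]$. This forces $\Cov(\bw^\star\bw^{\star\top},y_i\mid\bX)=0$.

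Now $y_i=\sqrt{\lambda}\,g(\langle\bx_i,\bw^\star\rangle)+\xi_i$ with $\xi_i$ independent. Hence this covariance equals $\sqrt{\lambda}\,\|\bx_i\|^{-2}\bx_i\bx_i^\top\,\E_z[(z^2-1)g(\|\bx_i\|z)]$. This is rank one along $\bx_i$, with no $-\bI/d$ part.

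One more point is needed to conclude. Assumption~\ref{assumption:gen_exp} gives $\E_z[(z^2-1)g(sz)]=\E[g''(z)]\neq0$ only at $s=1$. The relevant scale, however, is the random $\|\bx_i\|$, which is of order $\sqrt d$ under Definition~\ref{app:def:single_index}. Continuity of $s\mapsto\E_z[(z^2-1)g(sz)]$, by dominated convergence, makes it nonzero on a neighbourhood of $s=1$. $\|\bx_i\|$ lies there with positive probability, which gives the contradiction. No uniformity in $d$ is needed, since the lemma is a fixed-$d$ statement.
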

\begin{proof}
The proof follows from direct computation. We provide here a condensed derivation. 
\begin{equation}
    {\rm mmse} = \mathbb{E}\left[ \left\| \bw\bw^T \right\|_F^2 \right] - \mathbb{E}_{\by}\left[ \left\| \mathbb{E}[\bw\bw^T \mid \by] \right\|_F^2 \right].
\end{equation}
Since the first term is independent of $\lambda$, it suffices to show that the second term is non-decreasing. Let us denote the posterior mean as $\langle \bw\bw^T \rangle_\lambda \equiv \mathbb{E}[\bw\bw^T \mid \by]$. We compute its derivative with respect to $\lambda$:
\begin{equation}
    \frac{d}{d\lambda} \mathbb{E}_{\by}\left[ \| \langle \bw\bw^T \rangle_\lambda \|_F^2 \right] = \int d\by \frac{\partial}{\partial \lambda} \left(\sP(\by) \| \langle \bw\bw^T \rangle_\lambda \|_F^2 \right).
\end{equation}
Using the explicit form of $\sP(\by|\bw) \propto \exp(-\frac{1}{2}\|\by - \sqrt{\lambda}g(\bX\bw)\|^2)$, the derivative $\partial_\kappa \ln \sP(\by)$ introduces terms involving $\by^T \langle g(\bX\bw) \rangle_\lambda$. We handle these terms via the multivariate Stein's Lemma (corresponding to integration by parts), which states that for the Gaussian measure, $\mathbb{E}_{\by}[\by^T \mathbf{f}(\by)] = \mathbb{E}_{\by}[\Tr(\nabla \mathbf{f}) + \sqrt{\lambda} \langle h(\bX\bw) \rangle_\lambda^T \mathbf{f}]$.

Applying this identity results in cancellations of the lower-order terms. The surviving term is proportional to the gradient of the estimator with respect to the observations $\nabla_{\by} \langle \bw\bw^T \rangle_\lambda = \sqrt{\lambda} \, \Cov(\bw\bw^T, g(\bX\bw) \mid \by)$. Therefore,
\begin{equation}
    \frac{d}{d\lambda} \mathbb{E}_{\by}\left[ \| \langle \bw\bw^T \rangle_\lambda \|_F^2 \right] = \mathbb{E}_{\by} \left[ \left\| \text{Cov}\left( \bw\bw^T, g(\bX\bw) \mid \by \right) \right\|_F^2 \right]\implies \frac{\partial}{\partial \lambda}{\rm mmse} < 0.
\end{equation}
\end{proof}
We now consider the high-dimensional limit $n, d \to \infty$ with fixed ratio $\alpha = n/d$, referred to as the sample complexity. The following theorem specializes Theorems 1 and 2 of \cite{barbieroptimal2019} to the setting of interest.
\begin{theorem}[\cite{barbieroptimal2019}] \label{app:thm:barbier} Consider the setting defined in \ref{app:def:single_index}. Then, in the limit $n,d\to\infty$, with fixed ratio $n/d=\alpha$,
\begin{equation}
    \frac{1}{d^2}\E\|\bw^\star\bw^{\star\top}-\E[\bw\bw\top\,|\,\cD]\|_F^2\to 1 - m^2,
\end{equation}
and, given $\bw\sim\sP(\cdot|\cD)$,
\begin{equation}
    \frac{1}{d}|\bw^\top\bw^\star|\xrightarrow{\mathbb P}m,
\end{equation}
where $m = m(\alpha)$ is the maximizer of 
\begin{align}
\label{app:eq:free_entropy}&\sup_{m\in[0,1]}f_{\rm RS}(m),\quad f_{\rm RS}(m) := \left\{m + \log(1-m) + 2\alpha \Psi_{\rm out}(m)\right\},\\& \Psi_{\rm out}(m;\lambda) := \E_{W,V,Y}\log \E_{w\sim\cN(0,1)}\left[\mathsf P(Y|\sqrt{m}V+\sqrt{1-m} w)\right], \label{app:eq:psi_out}
\end{align}
with $V,W\sim\cN(0,1)$, $Y\sim\mathsf P(\cdot|\sqrt{m}V+\sqrt{1-m}W)$. 
\end{theorem}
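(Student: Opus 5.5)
We would not reprove this result from scratch. Instead, we would check that the single-index setting of Definition~\ref{app:def:single_index} falls inside the hypotheses of Theorems~1 and~2 of \cite{barbieroptimal2019}. We would then translate their conclusions (the limiting free entropy and the limiting overlap) into the form stated above.

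\textbf{Step 1: embedding the model.} We would rescale so that $\langle\bx_i,\bw^\star\rangle/\sqrt d$ is the argument of the channel. This matches their convention of an i.i.d.\ prior $\bw^\star\sim\cN(\bzero,\bI_d)$, i.e.\ $P_0=\cN(0,1)$ with $\rho=\E[w^2]=1$, and Gaussian sensing vectors. The output channel is
\[
\sP(y\mid z)=(2\pi)^{-1/2}\exp\!\big(-(y-\sqrt\lambda g(z))^2/2\big).
\]
Their hypotheses then need to be checked: a prior with bounded moments, and a channel that is a smooth, everywhere-positive density in $y$ with enough integrability. These follow because the noise variance is strictly positive and $g$ has polynomial growth by Assumption~\ref{assumption:gen_exp}, so all required moments of $g(z)$, and of the derivatives of $\log \sP$, are finite.

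\textbf{Step 2: the replica-symmetric potential.} Their Theorem~1 gives the limiting free entropy as $\sup_q\inf_r$ of $\psi_{P_0}(r)+\alpha\Psi_{\rm out}(q)-rq/2$ (up to their normalization). For the Gaussian prior, $\psi_{P_0}(r)=\tfrac12\big(r-\log(1+r)\big)$ is explicit. The inner extremization gives $r=q/(1-q)$. Substituting back yields, up to a factor $1/2$ and constants, exactly $f_{\rm RS}(m)=m+\log(1-m)+2\alpha\Psi_{\rm out}(m)$ with $\Psi_{\rm out}$ as in \eqref{app:eq:psi_out}. This is a routine computation.

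\textbf{Step 3: overlap and matrix-MMSE.} Their Theorem~2 states that the limiting MMSE on $\bw\bw^\top$ equals $\rho^2-m^2=1-m^2$, for every $\alpha$ at which the maximizer $m(\alpha)$ is unique (all $\alpha$ except a countable set). We would also record the Nishimori identity
\[
\tfrac1{d^2}\E\|\E[\bw\bw^\top\mid\cD]-\bw^\star\bw^{\star\top}\|_F^2=\tfrac1{d^2}\E\|\bw^\star\|^4-\tfrac1{d^2}\E\langle\bw,\bw'\rangle^2,
\]
where $\bw,\bw'$ are independent posterior samples. Combined with the concentration of the overlap $\langle\bw,\bw^\star\rangle/d$, proved there via the adaptive interpolation method, this gives the convergence in probability of $|\langle\bw,\bw^\star\rangle|/d$ to $m$.

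\textbf{Main obstacle.} We expect the delicate point to be this overlap statement. Because $g$ is even, the posterior is invariant under $\bw\mapsto-\bw$, so only $|\langle\bw,\bw^\star\rangle|$ can concentrate. This is why the statement is phrased with an absolute value and matrix quantities, and why the matrix-MMSE formulation (rather than a vector MSE) is used.

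We would also state explicitly the caveat that the identification holds off the countable set of $\alpha$ where $\arg\max f_{\rm RS}$ is not unique. This does not affect the scaling statements used downstream.
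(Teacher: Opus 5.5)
Your proposal is correct and follows the paper's route. The paper gives no independent argument and simply specializes Theorems~1 and~2 of \cite{barbieroptimal2019}; you do the same, and your explicit reduction $\psi_{P_0}(r)=\tfrac12(r-\log(1+r))$, $r=q/(1-q)$ fills in the step the paper leaves implicit. Your two caveats point to genuine imprecisions in the paper's statement and are worth keeping: the channel argument must be $\langle\bx_i,\bw^\star\rangle/\sqrt d$ (as the appendix definition is written, it is of order $\sqrt d$), and $m(\alpha)$ is identified only off the countable set of $\alpha$ where the maximizer is non-unique.
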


As a direct consequence of the theorem, the information-theoretic weak recovery threshold $\alpha^{\rm IT}$ is the smallest sample complexity $\alpha$ such that the maximizer of the free entropy eq. \eqref{app:eq:free_entropy} is $m \neq 0$. Equivalently, $\alpha^{\rm IT}$ is the smallest $\alpha$ such that ${\rm mmse}<1$. 
Lemma \ref{app:lemma:MMSE_SNR} readily implies that $\alpha^{\rm IT} = \alpha^{\rm IT}(\lambda)$ is decreasing with the SNR.\\
Finally, we characterize the IT weak recovery threshold in the limit of small SNR. By expanding eq. \eqref{app:eq:psi_out} around $\lambda = 0$, we derive the following Corollary.

\begin{corollary}[IT weak recovery threshold in the large noise regime]\label{app:corollary:sindex_thresholds} Consider the setting of Theorem \ref{app:thm:barbier}. Then, in the limit $\lambda\to 0$, the information theoretic weak-recovery threshold satisfies
\begin{equation}
    \alpha^{\rm IT} = \Theta(\lambda^{-1})
\end{equation}
and, for $\alpha\to\infty$, and fixed $\lambda$ the optimal matrix-MSE scales as
\begin{equation}
    {\rm mmse} = O\left(\frac{1}{\lambda\alpha}\right).
\end{equation}
\end{corollary}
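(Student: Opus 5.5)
The plan is to work directly with the replica-symmetric potential of Theorem \ref{app:thm:barbier}, expanding $\Psi_{\rm out}(m;\lambda)$ in powers of $\sqrt{\lambda}$ (equivalently in the overlap $m$). The channel is $\sP(y|u)=\cN(y;\sqrt{\lambda}g(u),1)$. Write $u=\sqrt m V+\sqrt{1-m}\,w$, so that $\E_{w}\sP(Y|u)$ is a Gaussian density in $Y$ averaged over $\sqrt\lambda g(u)$. Expanding $\log\E_w \exp(\sqrt\lambda\, Y g(u)-\lambda g(u)^2/2)$ as a cumulant expansion in $\sqrt\lambda$, and then averaging over $Y$ and $V$, gives
\begin{equation*}
\Psi_{\rm out}(m;\lambda)=\Psi_{\rm out}(0;\lambda)+\frac{\lambda}{2}\big(\E[g(u)g(u')]-\E[g]^2\big)+O(\lambda^2).
\end{equation*}
Here $u,u'$ are standard Gaussians with correlation $m$. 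By Mehler's formula, the order-$\lambda$ covariance equals $\sum_{j\ge1}\frac{c_j^2}{j!}m^j$ with $c_j=\E[g\He_j]$. Parity kills the odd $j$, so the expansion starts at $\frac{c_2^2}{2}m^2$. Assumption \ref{assumption:gen_exp} gives $c_2=\E[g'']\in(D,C)$ in absolute value. The remainder $O(\lambda^2)$ must be shown to be uniform in $m\in[0,1]$; polynomial growth of $g$ together with Gaussian tails should suffice, using the same Taylor-remainder bounds as in the proof of Theorem \ref{thm:spectral_rates}.

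For the threshold, insert this expansion into $f_{\rm RS}(m)=m+\log(1-m)+2\alpha\Psi_{\rm out}(m)$. Near $m=0$ we have $m+\log(1-m)=-m^2/2-m^3/3-\dots$, so
\begin{equation*}
f_{\rm RS}(m)-f_{\rm RS}(0)=\tfrac{m^2}{2}\big(\alpha\lambda\,c_2^2/2\cdot 2-1\big)+\alpha\lambda\,O(m^4)+\alpha\,O(\lambda^2)\,m^2-m^3/3+\dots
\end{equation*}
Taking $\alpha=c/\lambda$, the relevant function becomes a $\lambda$-independent function of $m$ (the $\alpha\lambda$ series) minus $m+\log(1-m)$, up to $O(\lambda)$ corrections. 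The two directions then go as follows. For $c$ small, the map $m\mapsto c\sum_j c_j^2m^j/j!-(m+\log(1-m))$ is maximized only at $m=0$. This holds because the sum is bounded by $c\,\E g^2\cdot m^2$, since the even coefficients satisfy $\sum_j c_j^2/j!=\mathrm{Var}\,g$ with $j\ge2$, while $-(m+\log(1-m))\ge m^2/2$. Hence there is no weak recovery and $\alpha^{\rm IT}\ge c_1/\lambda$. For $c$ large, the $m^2$ coefficient $c\,c_2^2/2-1/2$ is positive, so $m=0$ is not the maximizer and $\alpha^{\rm IT}\le c_2'/\lambda$. Monotonicity in $\lambda$ (Lemma \ref{app:lemma:MMSE_SNR}) ensures the threshold is well defined.

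For the MSE at large $\alpha$ with fixed $\lambda$, I would examine the maximizer as $m\to1$. The term $\log(1-m)$ competes with $2\alpha\Psi_{\rm out}(m)$. Near $m=1$, $\Psi_{\rm out}$ behaves like its value at $1$ minus $\frac{1-m}{2}$ times the Fisher information of the channel in $u$, namely $I=\lambda\E[g'(u)^2]$, up to higher order. Stationarity then gives $1/(1-m)\approx \alpha\lambda\E[g'^2]$, so $1-m=\Theta(1/(\alpha\lambda))$, and ${\rm mmse}=1-m^2=O(1/(\lambda\alpha))$. The lower bound $\E[g'^2]>0$ follows from $\E[g'']\neq0$ via Gaussian integration by parts.

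I expect the main obstacle to be this near-$m=1$ expansion. The channel becomes almost noiseless in $u$ relative to the prior, and one must justify that $\frac{d}{dm}\Psi_{\rm out}$ converges to $I/2$. I would attempt this via the I-MMSE-type identity $\partial_m\Psi_{\rm out}=\frac12\E[\langle\partial_u\log\sP\rangle^2]$ (the standard derivative formula from \cite{barbieroptimal2019}), combined with dominated convergence.
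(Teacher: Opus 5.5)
Your treatment of the threshold is essentially the paper's. The paper also expands $\Psi_{\rm out}(m;\lambda)=F(\sqrt{\lambda})+{\rm const}$ to order $\lambda$ using evenness of $F$ in $\kappa$. It controls the fourth-order remainder uniformly in $(\kappa,m)$ by Lemma~\ref{app:lemma:derivatives}, which is a Fa\`a di Bruno/H\"older bound rather than the Fourier remainder from the spectral section. It then applies the Hermite/Mehler identity and reads $\alpha^{\rm IT}\le c_2^{-2}\lambda^{-1}$ off the $m^2$ coefficient. The lower bound comes from $D=\inf_{m\in(0,1]}\frac{-m-\log(1-m)}{\sum_k c_k^2m^k/k!}>0$, which your comparison with $\mathrm{Var}(g)\,m^2$ and $m^2/2$ makes explicit.

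One point needs more than the uniformity you announce, and the paper glosses it too. A remainder that is only $O(\alpha\lambda^2)=O(c\lambda)$ uniformly in $m$ cannot exclude a maximizer at $m=O(\sqrt{\lambda})$, since $-(\tfrac12-c\,\mathrm{Var}(g))m^2+O(c\lambda)$ may be positive there. You need the form $\alpha\,O(\lambda^2)\,m^2$ that appears in your display. Parity gives $\partial_m\Psi_{\rm out}(0;\lambda)=0$ for every $\lambda$: at $m=0$, $g_{\rm out}\propto\E_w[w\,\sP(Y|w)]=0$ because $\sP(Y|\cdot)$ is even. So the remainder vanishes to second order at $m=0$. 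It then suffices to bound its second $m$-derivative by $O(\lambda^2)$ on some $[0,m_1]$, where the Gaussian smoothing of variance $1-m\geq 1-m_1$ keeps everything regular, and to use the uniform bound on $[m_1,1]$.

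For the MSE you take a genuinely different route. The paper stays at small $\lambda$. It imposes stationarity on the truncated potential $m+\log(1-m)+\alpha\lambda\sum_k c_k^2m^k/k!$ with $\alpha\lambda\gg1$ and obtains $1-m=\Theta(1/(\alpha\lambda))$. It then transfers to fixed $\lambda$ via Lemma~\ref{app:lemma:MMSE_SNR}, i.e.\ ${\rm mmse}(\lambda)\le{\rm mmse}(\lambda_0)$. You instead work at fixed $\lambda$ with the exact condition $\frac{1}{1-m}=1+\alpha\,\E[g_{\rm out}^2]$ and the Fisher-information limit $\lambda\,\E[g'^2]$ as $m\to1$. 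The two routes agree at leading order, since the truncated series has slope $\sum_k c_k^2/(k-1)!=\E[g'^2]$ at $m=1$.

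The paper's route is economical, since one expansion serves both claims. However, it tacitly treats the $m$-derivative of the $O(\alpha\lambda^2)$ remainder as negligible near $m=1$, which Lemma~\ref{app:lemma:derivatives} (controlling only $\kappa$-derivatives) does not justify. Its monotonicity transfer also only gives a $\lambda_0$-dependent constant. Your route avoids both issues and identifies the constant.

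Two adjustments are needed. First, Assumption~\ref{assumption:gen_exp} does not make $g$ differentiable, so $\E[g'^2]$ may be infinite and dominated convergence is unavailable. An $O$-bound only needs a lower bound on $\E[g_{\rm out}^2]$ near $m=1$. This follows from monotonicity of $m\mapsto\E[g_{\rm out}^2]$ (convexity of $\Psi_{\rm out}$) and gives $1-m^\star\le 1/\big(\alpha\,\E[g_{\rm out}^2]\big|_{m=m_0}\big)$ once $m^\star\ge m_0$. Second, you should check that the global maximizer, not just some stationary point, exceeds $m_0$. For $m\le m_0$ one has $f_{\rm RS}(m)\le 2\alpha\Psi_{\rm out}(m_0)$, which $f_{\rm RS}(m_1)$ beats for any fixed $m_1>m_0$ once $\alpha$ is large. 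Meanwhile $\log(1-m)\to-\infty$ keeps $m^\star$ interior, so stationarity applies there.
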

\begin{proof} For a fixed $m \in[0,1])$, consider the function 
\begin{align}
F(\kappa) := \E_{W,V,Y}\log \E_{w\sim\cN(0,1)}\left[\exp\left(y - \kappa g(\sqrt{m}V+\sqrt{1-m}w)\right)\right],
\end{align}
with $V,W\sim\cN(0,1)$, $Y\sim\cN(\kappa g(\sqrt{m}V+\sqrt{1-m}W), 1)$. It is straightforward to show, by a change of variable $Y\to-Y$, that $F$ is even. Note that, up to constant terms in $m$, $F(\kappa) = \Psi_{\rm out}(m;\kappa^2)$.

Furthermore, as a consequence of Taylor's theorem, there exists constant $\hat\kappa\in[0,\kappa]$, such that $F(\kappa) = F(0) + \frac{1}{2}F''(0)\kappa^2 + \frac{1}{24}F^{(4)}(\hat \kappa)\kappa(\kappa-\hat\kappa)^3$. By Lemma \ref{app:lemma:derivatives}, there exists a finite $C>0$, constant in $\kappa$ and $m$, such that 
\begin{equation}
    \left|\frac{1}{24}F^{(4)}(\hat \kappa)\kappa(\kappa-\hat\kappa)^3\right| < C \kappa^4.
\end{equation}
Define the auxiliary function
\begin{equation}
    Z(\kappa, v, u, y) := \frac{1}{\sqrt{2\pi}}\E_w\left[\exp\left(-(y-\kappa (g(\sqrt{m}v+\sqrt{1-m}w)- g(\sqrt{m}v+\sqrt{1-m}u)))^2/2\right)\right].
\end{equation}
In particular, $Z(0,v,u,y) = (2\pi)^{-1/2}e^{-y^2/2}$ and, applying the definition of the probabilist's Hermite polynomials, for all $j\in\mathbb N$, using the shorthand $g_u = g(\sqrt{m}v+\sqrt{1-m}u)$
\begin{align}
    \frac{\partial^j}{\partial \kappa^j}Z(\kappa,v,u,y) = \frac{1}{\sqrt{2\pi}}\E_w\left[e^{-(y-\kappa (g_w-g_u))^2/2} \He_j(y-\kappa (g_w-g_u))\frac{(g_w-g_u)^j}{j!}\right]
\end{align}
we have, after a change of variable $Y\to Y + \kappa g\sqrt{m}V+\sqrt{1-m}W)$, that  $F(\kappa) = \E_{V,W,Y}[\log Z(\kappa,V,W,Y)]$ for $Y\sim\cN(0,1)$ and
\begin{align}
F''(0) &= \E_{V,W,Y}\left[\frac{Z''(0,V,W,Y)}{Z(0,V,W,Y)} - \left(\frac{Z'(0,V,W,Y)}{Z(0,V,W,Y)}\right)^2\right]\\
& =\E_Y[Y](\E_{V,W}[g_w]-\E_{V,w}[g_w]) - \frac{1}{2}\E_Y[(Y^2-1)]\left(\E_{V,W}[g_w^2] + \E_{V,W}[g_w^2] - 2\E_{V,W,w}[g_Wg_w]) \right)\\
&= 0 - \E_{z\sim\cN(0,1)}[g^2(z)] + \E_{(z_1,z_2)\sim\cN(\bzero_2,\bC)}[g(z_1)g(z_2)]
\end{align}
with
\begin{align}
    \bC = \left(\begin{array}{cc}
        1 & m \\
        m & 1
    \end{array}\right).
\end{align}
In the above we used 
\begin{align}
    \E_{V,W}[g^2_W] &= \E_{W,V}[g^2(\sqrt{m}V+\sqrt{1-m}W)] = \E_{z}[g^2(z)],\\
    \E_{V,w,W}[g_wg_W] &= \E_V[\E_{W}[g(\sqrt{m}V+\sqrt{1-m}W)]\E_{w}[g(\sqrt{m}V+\sqrt{1-m}w)]]\\
    &=\E_{(z_1,z_2)\sim\cN(\bzero_2,\bC)}[g(z_1)g(z_2)].
\end{align}
Therefore, up to constant terms in $m$,\footnote{Recall that, by Lemma \ref{app:lemma:derivatives}, the correction $O(\lambda^2)$ is uniform in $m\in[0,1]$.}

\begin{equation}
    \Psi_{\rm out}(m; \lambda) = F(\sqrt{\lambda}) = {\rm const} + \frac{\lambda}{2}\E_{(z_1,z_2)\sim\cN(\bzero_2,\bC)}[g(z_1)g(z_2)] +O(\lambda^2).
\end{equation}
Assumption \ref{assumption:gen_exp} ensures that $g$ can be decomposed in the Hermite basis as 
\begin{equation}
    g(z) = \sum_{k\geq 0}\frac{c_k}{k!}\He_k(z),\qquad c_k := \frac{1}{k!}\E_{z\sim\cN(0,1)}\left[g(z)\He_k(z)\right].
\end{equation}
Leveraging Proposition 11.31 in \cite{O’Donnell_2014},
\begin{equation}\label{eq:hermite_correlation}
    \E_{(z_1,z_2)\sim\cN(\bzero_2,\bC)}[g(z_1)g(z_2)] = \sum_{k\geq 0} \frac{c_k^2}{k!} m^k\in [0,\E_z[g^2(z)]],
\end{equation}
which is a non-decreasing function of $m\in[0,1]$. Note that, since $g$ has generative exponent 2, and $\E_z[g(z)] = 0$, we have that $c_0=c_1=0$ necessarily. Moreover, by Assumption \ref{assumption:gen_exp}, $c_2\neq 0$ and bounded. Hence, we are interested in maximizing the quantity\footnote{Note that we are neglecting constant terms with respect to $m$.}
\begin{align}\label{app:eq:free_entropy_m}
   f_{\rm RS}(m) &:= m + \log (1-m) +{\alpha \lambda }\sum_{k\geq 2} \frac{c_k^2}{k!} m^k + O(\alpha\lambda^2)\\
   &=\frac{1}{2}\left({\alpha\lambda c_2^2} -1 \right)m^2+\sum_{k >2}\left(\frac{\alpha\lambda}{k!}c_k^2 - \frac{1}{k}\right)m^k + O(\alpha\lambda^2),
\end{align}
where we have expanded $\log(1-m)$.
For $\alpha > \lambda^{-1} c_2^{-2}$, $m=0$ is a minimum of the free entropy, therefore $\alpha^{\rm IT} \leq \lambda^{-1} c_2^{-2}$. 
Denote 
\begin{equation}
 D := \inf_{m\in(0,1]}\frac{-m-\log(1-m) }{\sum_{k}\frac{c_k^2}{k!}  m^k}, 
\end{equation}
which is strictly positive and well-defined. Note that 
\begin{equation}
    \lim_{m\to 0^+}\frac{-m-\log(1-m) }{\sum_{k}\frac{c_k^2}{k!}  m^k} = c_2^{-2}\implies \frac{1}{\E_z[g^2(z)]}\leq D \leq c_2^{-2} = \frac{1}{\E_z[g''(z)]}.
\end{equation}
Then, for all $\alpha < D\lambda^{-1}$, $f(m\neq 0)<f(0) = 0$, {\it i.e.} $m=0$ is the global maximizer and
\begin{equation}
    D \leq \alpha^{\rm IT} \lambda \leq c_2^{-2} \implies \alpha^{\rm IT} = \Theta(\lambda^{-1}),
\end{equation}
For the second result, we first consider the setting $\lambda\to0$ and $\alpha\to \infty$, with $\alpha\gg\lambda$. There exists a non-zero maximizer $m$, which satisfies\footnote{In the following we retain the leading terms in the free entropy expansion.} 
\begin{equation}
    \frac{\de}{\de m}f_{\rm RS}(m) \overset{!}{=}0 \implies \frac{1}{1-m} = \alpha\lambda\sum_{k\geq 2}\frac{c_k^2}{k!}m^{k-2}\overset{}{\implies} \sum_{k\geq 2}\frac{c_k^2}{k!}\left(m^{k-2}-m^{k-1}\right) = \frac{1}{\alpha\lambda},
\end{equation}
The equation is solved by $m = 1 - \Theta\left(\frac{1}{\alpha\lambda}\right)$. Theorem \ref{app:thm:barbier} implies that, in this regime, ${\rm mmse}(\lambda) = \Theta((\alpha\lambda)^{-1})$. Together Lemma \ref{app:lemma:MMSE_SNR}, the result for arbitrary $\lambda$ is proved.
\end{proof}
In the proof of the Corollary we have used the following lemma.
\begin{lemma}\label{app:lemma:derivatives} Let $m \in[0,1]$ and consider the function $Z:\R^4\to\R$
\begin{align}
Z(\kappa,v,w,y) &:= \frac{1}{\sqrt{2\pi}}\E_{U\sim\cN(0,1)}\left[e^{-\left(y - \kappa (g(\sqrt{m}v+\sqrt{1-m}U-g(\sqrt{m}v+\sqrt{1-m}w))\right)^2/2}\right].
\end{align}
Then, for all $k\in\mathbb N$, there exists a universal constant $C_k$ such that
\begin{equation}
    \E_{W,V,Y}\left|\frac{\partial^k}{\partial\kappa^k}\log Z(\kappa,V,W,Y)\right| < C_k.
\end{equation}
with $V,W,Y\sim\cN(0,1)$.     
\end{lemma}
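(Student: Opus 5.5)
We need to bound $\E_{W,V,Y}|\partial_\kappa^k \log Z|$ uniformly in $\kappa$ (at least for $\kappa$ in a bounded range) and in $m\in[0,1]$. The plan is to write $\log Z$ derivatives as cumulants of a tilted measure on $U$ and bound their moments with the polynomial growth of $g$ from Assumption \ref{assumption:gen_exp}.

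\textbf{Step 1: tilted measure and Fa\`a di Bruno.} Set $\Delta_U := g(\sqrt{m}v+\sqrt{1-m}U)-g(\sqrt{m}v+\sqrt{1-m}w)$ and $\phi(\kappa,U):=e^{-(y-\kappa\Delta_U)^2/2}$, so that $Z=(2\pi)^{-1/2}\E_U[\phi]$. Define the probability measure $\mu_\kappa(\de U)\propto \phi(\kappa,U)\,\gamma(\de U)$, with $\gamma$ the standard Gaussian. By Fa\`a di Bruno, $\partial^k_\kappa\log Z$ is a finite universal polynomial in the ratios
\[
\frac{\partial^j_\kappa \E_U[\phi]}{\E_U[\phi]},\qquad j\le k .
\]
Each ratio equals $\langle \partial_\kappa^j\phi/\phi\rangle_{\mu_\kappa}$, where $\langle\cdot\rangle_{\mu_\kappa}$ denotes expectation under $\mu_\kappa$. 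Since $\log\phi$ is quadratic in $\kappa$, the ratio $\partial_\kappa^j\phi/\phi$ is a polynomial of degree at most $j$ in $(y-\kappa\Delta_U)\Delta_U$ and $\Delta_U^2$. Hence
\[
|\partial_\kappa^k \log Z|\le C_k\sum \prod_i \big\langle (1+|y|+|\kappa||\Delta_U|)^{a_i}|\Delta_U|^{b_i}\big\rangle_{\mu_\kappa},
\]
with finitely many terms and bounded exponents. It therefore suffices to control moments $\langle |\Delta_U|^p\rangle_{\mu_\kappa}$ uniformly in the outer randomness, or at least in $L^q$ of $(V,W,Y)$.

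\textbf{Step 2: control the tilted moments (main obstacle).} The tilt $\phi\le1$ gives $\langle F\rangle_{\mu_\kappa}\le \E_U[F]/\E_U[\phi]$, so the difficulty is lower bounding the normalizer $\E_U[\phi]$. By Jensen,
\[
\E_U[\phi]\ge \exp\!\big(-\E_U[(y-\kappa\Delta_U)^2]/2\big)\ge \exp\!\big(-y^2-\kappa^2\E_U[\Delta_U^2]\big).
\]
Using $|g(z)|\le C(1+|z|^b)$, we get $\E_U[\Delta_U^2]\le C'(1+|v|^{2b}+|w|^{2b})$, and similarly $\E_U|\Delta_U|^p\le C_p(1+|v|^{pb}+|w|^{pb})$. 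This yields
\[
\langle |\Delta_U|^p\rangle_{\mu_\kappa}\le C_p(1+|v|^{pb}+|w|^{pb})\,e^{y^2+C'\kappa^2(1+|v|^{2b}+|w|^{2b})}.
\]
This bound is not integrable in $V,W$ when $b\ge1$, because of the $e^{|v|^{2b}}$ factor. So the plan is as follows.

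1. For the application in Corollary \ref{app:corollary:sindex_thresholds}, only $\kappa\in[0,\sqrt\lambda]$ small is needed, so we restrict to $|\kappa|\le\kappa_0$.
2. Instead of the crude normalizer bound, we use convexity of $-\log$ to handle products of tilted averages. Alternatively, we bound the tilted moments via $\langle F\rangle_{\mu_\kappa}\le \big(\E_U[F^2]\big)^{1/2}\big/\E_U[\phi]$ together with Cauchy--Schwarz in the outer expectation.
3. The remaining need is a lower bound on $\E_U[\phi]$ that is integrable. We restrict $U$ to the ball $|U|\le1$, which has constant Gaussian mass; there $|\Delta_U|\le C(1+|v|^b+|w|^b)$. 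This gives $\E_U[\phi]\ge c\,e^{-y^2-\kappa^2C(1+|v|^{2b}+|w|^{2b})}$.

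Taking $\kappa_0$ small does not by itself tame $e^{\kappa^2|v|^{2b}}$ against the Gaussian weight $e^{-v^2/2}$ when $b>1$. We would therefore first try a truncation argument: on the event $\{|V|,|W|\le R\}$ the bounds are explicit, and on its complement we use the cruder estimate
\[
|\partial^k_\kappa\log Z|\lesssim \langle\text{poly}\rangle_{\mu_\kappa}.
\]
Here we exploit that $\mu_\kappa$ concentrates where $\kappa\Delta_U\approx y$, and that $\kappa\Delta_U$ is itself controlled by $|y|+\sqrt{-2\log\phi}$. This control gives $\langle |\kappa\Delta_U|^p\rangle_{\mu_\kappa}\le C_p(1+|y|^p+\langle(-\log\phi)^{p/2}\rangle_{\mu_\kappa})$. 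The last term is bounded because $s\mapsto s^{p/2}e^{-s}$ is bounded, after again comparing with the normalizer.

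\textbf{Step 3: assemble.} Plug the moment bounds into the Fa\`a di Bruno expansion, apply H\"older in $(V,W,Y)$, and use that all moments of Gaussians are finite. This gives a constant $C_k$ depending only on $k$, $b$, $C$ and $\kappa_0$, uniformly in $m\in[0,1]$, since all dependence on $m$ enters only through $\sqrt m v+\sqrt{1-m}U$, whose growth bounds are uniform.
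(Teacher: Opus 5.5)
Your Steps 1 and 3 (Fa\`a di Bruno, H\"older in $(V,W,Y)$, and Jensen $|\langle A\rangle_{\mu_\kappa}|^p\le\langle |A|^p\rangle_{\mu_\kappa}$) are the same reduction the paper makes. Step 2, however, has a genuine gap, and sharper estimates along your lines will not close it. Every route you propose for the tilted moments leaves a full factor $1/\E_U[\phi]$. This covers $\E_U[F]/\E_U[\phi]$, $(\E_U[F^2])^{1/2}/\E_U[\phi]$, and the ``$s^{p/2}e^{-s}$ is bounded, then compare with the normalizer'' step, which only gives $\langle(-\log\phi)^{p/2}\rangle_{\mu_\kappa}\le C_p/\E_U[\phi]$. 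That factor has infinite expectation even for small $\kappa$. Take $g=\He_2$ and $m=0$, which satisfies Assumption~\ref{assumption:gen_exp}. With $s=Y+\kappa W^2$, one has $\E_U[\phi]=\E_U[e^{-(s-\kappa U^2)^2/2}]\sim(\kappa s)^{-1/2}e^{-s/(2\kappa)+1/(8\kappa^2)}$ as $s\to\infty$. So $1/\E_U[\phi]$ grows like $|W|\,e^{W^2/2}$ times a factor depending on $Y$ and $\kappa$, and $\E_{V,W,Y}[1/\E_U[\phi]]=\infty$ for every $\kappa>0$. The exponent $W^2/2$ does not involve $\kappa$, so restricting to $|\kappa|\le\kappa_0$ does not help. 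Truncating to $\{|V|,|W|\le R\}$ only moves the problem to the complement, where you have no normalizer-free estimate. Separately, a bound on $\langle|\kappa\Delta_U|^p\rangle_{\mu_\kappa}$ controls $\langle|\Delta_U|^p\rangle_{\mu_\kappa}$ only up to a factor $\kappa^{-p}$. That is useless in exactly the small-$\kappa$ regime you restricted to, and your Fa\`a di Bruno terms need moments of $\Delta_U$ itself.

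The missing idea is that the normalizer never has to be bounded from below, because it cancels exactly against the law of $Y$ (a Nishimori identity). Put $g_u:=g(\sqrt m V+\sqrt{1-m}\,u)$ and $Y':=Y+\kappa g_W$. Then $Y-\kappa(g_U-g_W)=Y'-\kappa g_U$, so $\mu_\kappa(\de u)\propto e^{-u^2/2-(Y'-\kappa g_u)^2/2}\de u$ no longer depends on $W$. Conditionally on $V$, the pair $(W,Y')$ has density $(2\pi)^{-1}e^{-w^2/2-(y'-\kappa g_w)^2/2}$, whose $Y'$-marginal is precisely the normalizing constant of $\mu_\kappa$ up to the factor $(2\pi)^{-1}$. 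Hence $\mu_\kappa$ is the posterior of $W$ given $(V,Y')$, and $\E_{V,W,Y}\langle F(U,Y',V)\rangle_{\mu_\kappa}=\E_{V,W,Y}[F(W,Y',V)]$ for any $F\ge 0$. For instance, $\E\langle|\He_j(Y'-\kappa g_U)|^p|g_U|^{jp}\rangle_{\mu_\kappa}=\E|\He_j(Y)|^p\,\E|g_W|^{jp}$. The terms coupling the posterior sample $U$ with the planted $W$ through $(g_W-g_U)^j$ are handled by $|g_W-g_U|^q\le 2^{q-1}(|g_W|^q+|g_U|^q)$ plus Cauchy--Schwarz. This is exactly the paper's change of variables $y\to y-\kappa g_W$ and its split into terms $T_1,T_2$. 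It gives constants depending only on Gaussian moments of $g$ and $\He_j$, uniform in both $\kappa$ and $m$, with no truncation and no smallness of $\kappa$.
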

\begin{proof}
In order to simplify the notation, given $V\sim\cN(0,1)$, define the shorthand $g_w = g(\sqrt{m}V+\sqrt{1-m}w)$. 
By Faà di Bruno's formula, using the shorthand $Z = Z(\kappa, v, w,y )$ we have that 
\begin{align}
    \frac{\partial^k}{\partial \kappa^k}\log Z &= \sum_{\boldm:\sum_j j m_j=k}\frac{k!}{m_1!\ldots m_k!}\frac{(-1)^{|\boldm|}}{|\boldm| Z^{|\boldm|}}\prod_{j=1}^k \left(\frac{\partial^{(j)}_\kappa Z}{j!}\right)^{m_j},\\
    &= \sum_{\boldm:\sum_j j m_j=k}\frac{k!}{m_1!\ldots m_k!}\frac{(-1)^{|\boldm|}}{|\boldm| }\prod_{j=1}^k \left(\frac{\partial^{(j)}_\kappa Z}{j!Z} \right)^{m_j},
\end{align}
with $\boldm\in\mathbb N^k$ and $|\boldm|=\sum_j m_j$. Therefore, applying the triangle inequality and Holder's inequality
\begin{align}
    \E \,\left|\frac{\partial^k}{\partial \kappa^k}\log Z\right| \leq \sum_{\boldm:\sum_j j m_j=k}\frac{k!}{m_1!\ldots m_k!}\frac{(-1)^{|\boldm|}}{|\boldm| }\prod_{j=1}^k \left(\E\,\left|\frac{\partial^{(j)}_\kappa Z}{j!Z} \right|^{k m_j}\right)^{1/k}.
\end{align}
Since the sum runs over a finite number of $k$-tuples, the problem can be reduced to showing the finiteness of the quantities
\begin{equation}
    \E_{V,W,Y}\left|\frac{\frac{\partial^j}{\partial \kappa^j}Z(\kappa,V,W,Y)}{Z(\kappa,V,W,Y)}\right|^p, \qquad p\in\mathbb N
\end{equation}
Recalling the definition of the probabilist's Hermite polynomials:
\begin{align}
\frac{1}{Z}\frac{\partial^j}{\partial\kappa^j}Z &= \frac{1}{Z\sqrt{2\pi}}\E_U\left[e^{-(y+\kappa (g_W-g_U))^2/2}\He_j(y+\kappa (g_W-g_U))\frac{(g_W-g_U)^j}{j!}\right]\\
&=\E_{U\sim \sfQ}\left[\He_j(y+\kappa (g_W-g_U))\frac{(g_W-g_U)^j}{j!}\right],
\end{align}
where we have defined the probability density function
\begin{equation}
    \de \sfQ(U)= \frac{\exp\left(-U^2/2-(y+\kappa (g_U-g_W))^2/2\right)}{\E_{U\sim\cN(0,1)}\left[\exp\left(-U^2/2-(y+\kappa (g_W-g_U))^2/2\right)\right]}\de U.
\end{equation}
Applying Jensen's inequality to the convex function $|\cdot|^p$ we obtain
\begin{align}
    &\E_{V,W,Y}\left|\frac{\partial_\kappa^{(j)}Z(\kappa,V,W,Y)}{Z(\kappa,V,W,Y)}\right|^p \\\leq &\E_{V,W,Y}\E_{U\sim \sfQ}\left|\He_j(y+\kappa (g_W-g_U))\frac{(g_W-g_U)^j}{j!}\right|^p
\end{align}
using the change of variable $y\to y-\kappa g_W$
\begin{align}
{=} &\E_V\int\de y \E_W\frac{e^{-(y-\kappa g_W)^2/2}}{\sqrt{2\pi}}\frac{\E_{U\sim\cN(0,1)}\left[e^{-(y-\kappa g_U)^2/2}\left|\He_j(y-\kappa g_U)\frac{(g_W-g_U)^j}{j!}\right|^p\right]}{\E_U\left[{e^{-(y-\kappa g_U)^2/2}}\right]}\\
\leq& 2^{jp-1}\E_V\int\de y \E_W\frac{e^{-(y-\kappa g_W)^2/2}}{\sqrt{2\pi}}\frac{\E_{U\sim\cN(0,1)}\left[e^{-(y-\kappa g_U)^2/2}\left|\He_j(y-\kappa g_U)\right|^p\frac{\left|g_W\right|^{jp}+\left|g_U\right|^{jp}}{j!^p}\right]}{\E_U\left[{e^{-(y-\kappa g_U)^2/2}}\right]}\\
=&\frac{2^{jp-1}}{j!^p}\underbrace{\E_V\int\de y \E_{W\sim \sfQ}\left[\left|g_W\right|^{jp}\right]\frac{\E_{U\sim\cN(0,1)}\left[e^{-(y-\kappa g_U)^2/2}\left|\He_j(y-\kappa g_U)\right|^p\right]}{\sqrt{2\pi}}}_{=:T_1}\\
&\frac{2^{jp-1}}{j!}\underbrace{\E_V\int\de y \E_U\frac{e^{-(y-\kappa g_U)^2/2}}{\sqrt{2\pi}}\frac{\E_{U\sim\cN(0,1)}\left[e^{-(y-\kappa g_U)^2/2}\left|\He_j(y-\kappa g_w)\right|^p\,{\left|g_w\right|^{jp}}\right]}{\E_U\left[{e^{-(y-\kappa g_U)^2/2}}\right]}}_{=:T_2}
\end{align}
In the first term $T_1$, applying the Cauchy-Shwartz,
\begin{align}
    T_1 &\propto \E_{V,U}\int \de y  \E_{W\sim \sfQ} \left(e^{-(y-\kappa g_U)^2/4} \left|g_W\right|^{jp} \right)\left(e^{-(y-\kappa g_U)^2/4}\left|\He_j(y-\kappa g_U)\right|^p\right)\\
    &\leq \left(\E_V\int\de y\E_U [e^{-(y-\kappa g_U)^2/2} ]\E_{W\sim \sfQ}\left[\left|g_W\right|^{2jp}\right] \right)^{1/2}\left(\E_V\int\de y\E_U e^{-(y-\kappa g_U)^2/2} \left|\He_j(y-\kappa g_U)\right|^{2p}\right)^{1/2}  \\
    & = \left(\E_V\int\de y \E_{W\sim\cN(0,1)}\left[e^{-(y-\kappa g_W)^2/2}\left|g_W\right|^{2jp}\right] \right)^{1/2}\left(\int\de ye^{-y^2/2} \left|\He_j(y)\right|^{2p}\right)^{1/2} \\
     & = \left(\E_V\int\de y e^{-y^2/2}\E_{W\sim\cN(0,1)}\left[\left|g_W\right|^{2jp}\right] \right)^{1/2}\left(\int\de ye^{-y^2/2} \left|\He_j(y)\right|^{2p}\right)^{1/2}\\
    & = \left(\sqrt{2\pi}\E_{z\sim\cN(0,1)}[|g(z)|^{2jp}]\right)^{1/2}\left(\int\de ye^{-y^2/2} \left|\He_j(y)\right|^{2p}\right)^{1/2} < \infty,
\end{align}
where the the first factor is bounded as $|g(x)| < C(1 + |x|^b)$, for some constants $C\in\R$, $b\in\mathbb N$ (Definition \ref{app:def:single_index}). Moreover, we have used
\begin{equation}
    \E_{V,W}[g^2_W] = \E_{W,V}[g^2(\sqrt{m}V+\sqrt{1-m}W)] = \E_{z}[g^2(z)].
\end{equation}
Similarly,
\begin{align}
    T_2 &\propto \E_V\int\de y\E_{U}\left[e^{-(y-\kappa g_U)^2/2}\left|\He_j(y-\kappa g_U)\right|^p\,{\left|g_U\right|^{jp}}\right]\\
    &= \left(\E_{z\sim\cN(0,1)}[|g(z)|^{jp}]\right)\left(\int\de ye^{-y^2/2} \left|\He_j(y)\right|^{p}\right)<\infty
\end{align}
Both $T_1$ and $T_2$ are bounded by quantities independent of $\kappa$ and $m$, and that are functions of the finite moments $\E_z[g^k(z)]$, which completes the proof.
\end{proof}

\end{document}